\pgfplotsset{compat=1.5}
\newenvironment{proof}{\noindent{\bf Proof : \ }}{\hfill$\Box$\par\medskip}
\newtheorem{theorem}{Theorem}[section]
\newtheorem{lemma}[theorem]{Lemma}
\newtheorem{definition}[theorem]{Definition}
\newtheorem{remark}[theorem]{Remark}
\newtheorem{problem}[theorem]{Problem}
\newenvironment{proofof}[1]{\begin{trivlist} \item {\bf Proof
#1:~~}}
  {\qed\end{trivlist}}
\renewenvironment{proofof}[1]{\par\medskip\noindent{\bf Proof of #1: \ }}{\hfill$\Box$\par\medskip}
\newcommand{\namedref}[2]{\hyperref[#2]{#1~\ref*{#2}}}
\newcommand{\thmlab}[1]{\label{thm:#1}}
\newcommand{\thmref}[1]{\namedref{Theorem}{thm:#1}}
\newcommand{\lemlab}[1]{\label{lem:#1}}
\newcommand{\lemref}[1]{\namedref{Lemma}{lem:#1}}
\newcommand{\seclab}[1]{\label{sec:#1}}
\newcommand{\secref}[1]{\namedref{Section}{sec:#1}}
\newcommand{\remlab}[1]{\label{rem:#1}}
\newcommand{\remref}[1]{\namedref{Remark}{rem:#1}}
\newcommand{\figlab}[1]{\label{fig:#1}}
\newcommand{\figref}[1]{\namedref{Figure}{fig:#1}}
\newcommand{\alglab}[1]{\label{alg:#1}}
\renewcommand{\algref}[1]{\namedref{Algorithm}{alg:#1}}
\newcommand{\probref}[1]{\namedref{Problem}{prob:#1}}
\newcommand{\problab}[1]{\label{prob:#1}}
\def \streamalg    {\mdef{\mathsf{StreamAlg}}}
\def \adversary    {\mdef{\mathsf{Adversary}}}
\newcommand\norm[1]{\left\lVert#1\right\rVert}
\newcommand{\PPr}[1]{\ensuremath{\mathbf{Pr}\left[#1\right]}}
\newcommand{\Ex}[1]{\ensuremath{\mathbb{E}\left[#1\right]}}
\renewcommand{\O}[1]{\ensuremath{\mathcal{O}\left(#1\right)}}
\newcommand{\eps}{\varepsilon}
\def \a    {\mdef{\mathbf{a}}}
\def \calQ    {\mdef{\mathcal{Q}}}
\def \A    {\mdef{\mathbf{A}}}
\def \B    {\mdef{\mathbf{B}}}
\def \M    {\mdef{\mathbf{M}}}
\def \P    {\mdef{\mathbf{P}}}
\def \b    {\mdef{\mathbf{b}}}
\def \p    {\mdef{\mathbf{p}}}
\def \u    {\mdef{\mathbf{u}}}
\def \w    {\mdef{\mathbf{w}}}
\def \v    {\mdef{\mathbf{v}}}
\def \x    {\mdef{\mathbf{x}}}
\def \y    {\mdef{\mathbf{y}}}
\def \z    {\mdef{\mathbf{z}}}
\newcommand{\mdef}[1]{{\ensuremath{#1}}\xspace}  
\DeclareMathOperator*{\poly}{poly}
\DeclareMathOperator*{\dist}{dist}
\DeclareMathOperator*{\Span}{span}
\newcommand{\ignore}[1]{}
\newif\ifnotes\notestrue 
\newcommand{\samson}[1]{\textcolor{purple}{{\bf (Samson:} {#1}{\bf ) }} \marginpar{\tiny\bf
             \begin{minipage}[t]{0.5in}
               \raggedright S:
            \end{minipage}}}            							
\newcommand{\samson}[1]{}
\definecolor{darkred}{rgb}{0.55, 0.0, 0.0}
\renewcommand*{\@fnsymbol}[1]{\textcolor{darkred}{\ensuremath{\ifcase#1\or *\or \dagger\or \ddagger\or
 \mathsection\or \triangledown\or \mathparagraph\or \|\or **\or \dagger\dagger
   \or \ddagger\ddagger \else\@ctrerr\fi}}}
\providecommand{\email}[1]{\href{mailto:#1}{\nolinkurl{#1}\xspace}}
\definecolor{mahogany}{rgb}{0.75, 0.25, 0.0}
\definecolor{bleudefrance}{rgb}{0.19, 0.55, 0.91}
\definecolor{darkblue}{rgb}{0.0, 0.0, 0.55}
\title{Adversarial Robustness of Streaming Algorithms through Importance Sampling}
\author{
Vladimir Braverman
\\Google\thanks{E-mail: \email{vbraverman@google.com}}
\and
Avinatan Hassidim
\\Google\thanks{E-mail: \email{avinatan@google.com}}
\and
Yossi Matias
\\Google\thanks{E-mail: \email{yossi@google.com}}
\and
Mariano Schain
\\Google\thanks{E-mail: \email{marianos@google.com}}
\and
Sandeep Silwal
\\MIT\thanks{E-mail: \email{silwal@mit.edu}}
\and
Samson Zhou
\\Carnegie Mellon University\thanks{E-mail: \email{samsonzhou@gmail.com}}
}
\begin{document}

\maketitle

\begin{abstract} 
Robustness against adversarial attacks has recently been at the forefront of algorithmic design for machine learning tasks. In the adversarial streaming model, an adversary gives an algorithm a sequence of adaptively chosen updates $u_1,\ldots,u_n$ as a data stream. The goal of the algorithm is to compute or approximate some predetermined function for every prefix of the adversarial stream, but the adversary may generate future updates based on previous outputs of the algorithm. In particular, the adversary may gradually learn the random bits internally used by an algorithm to manipulate dependencies in the input. This is especially problematic as many important problems in the streaming model require randomized algorithms, as they are known to not admit any deterministic algorithms that use sublinear space. In this paper, we introduce adversarially robust streaming algorithms for central machine learning and algorithmic tasks, such as regression and clustering, as well as their more general counterparts, subspace embedding, low-rank approximation, and coreset construction. For regression and other numerical linear algebra related tasks, we consider the row arrival streaming model. Our results are based on a simple, but powerful, observation that many importance sampling-based algorithms give rise to adversarial robustness which is in contrast to sketching based algorithms, which are very prevalent in the streaming literature but suffer from adversarial attacks. In addition, we show that the well-known merge and reduce paradigm in streaming is adversarially robust. Since the merge and reduce paradigm allows coreset constructions in the streaming setting, we thus obtain robust algorithms for $k$-means, $k$-median, $k$-center, Bregman clustering, projective clustering, principal component analysis (PCA) and non-negative matrix factorization. To the best of our knowledge, these are the first adversarially robust results for these problems yet require no new algorithmic implementations. Finally, we empirically confirm the robustness of our algorithms on various adversarial attacks and demonstrate that by contrast, some common existing algorithms are not robust. 
\end{abstract}
\section{Introduction}
Robustness against adversarial attacks have recently been at the forefront of algorithmic design for machine learning tasks~\cite{GoodfellowSS15,Carlini017,AthalyeEIK18,MadryMSTV18,TsiprasSETM19}. We extend this line of work by studying adversarially robust streaming algorithms. 

In the streaming model, data points are generated one at a time in a stream and the goal is to compute some meaningful function of the input points while using a limited amount of memory, typically \emph{sublinear} in the total size of the input. The streaming model is applicable in many algorithmic and ML related tasks where the size of the data far exceeds the available storage. Applications of the streaming model include monitoring IP traffic flow, analyzing web search queries \cite{application_stream}, processing large scientific data, feature selection in machine learning \cite{feature_stream_1, feature_stream_2, feature_stream_3}, and estimating word statistics in natural language processing \cite{nlp_stream} to name a few. Streaming algorithms have also been implemented in popular data processing libraries such as Apache Spark which have implementations for streaming tasks such as clustering and linear regression \cite{apache_spark}.

In the adversarial streaming model~\cite{MitrovicBNTC17,BogunovicMSC17,AvdiukhinMYZ19,Ben-EliezerY20,Ben-EliezerJWY20,HassidimKMMS20,WoodruffZ20,AlonBDMNY21,KaplanMNS21}, an adversary gives an algorithm a sequence of adaptively chosen updates $u_1,\ldots,u_n$ as a data stream. 
The goal of the algorithm is to compute or approximate some predetermined function for every prefix of the adversarial stream, but the adversary may generate future updates based on previous outputs of the algorithm. 
In particular, the adversary may gradually learn the random bits internally used by an algorithm to manipulate dependencies in the input. This is especially problematic as many important problems in the streaming model require randomized algorithms, as they are known to not admit any deterministic algorithms that use sublinear space. Studying when adversarially robust streaming algorithms are possible is an important problem in lieu of recent interest in adversarial attacks in ML with applications to adaptive data analysis.

Formally, we define the model as a two-player game between a streaming algorithm $\streamalg$ and a source $\adversary$ of adaptive and adversarial input to $\streamalg$. 
At the beginning of the game, a fixed query $\calQ$ is determined and asks for a fixed function for the underlying dataset implicitly defined by the stream. 
The game then proceeds in rounds, and in the $t$-th round, 
\begin{enumerate}
\item
$\adversary$ computes an update $u_t\in[n]$ for the stream, which depends on all previous stream updates and all previous outputs from $\streamalg$. 
\item
$\streamalg$ uses $u_t$ to update its data structures $D_t$, acquires a fresh batch $R_t$ of random bits, and outputs a response $A_t$ to the query $\calQ$.
\item
$\adversary$ observes and records the response $A_t$.
\end{enumerate}
The goal of $\adversary$ is to induce $\streamalg$ to make an incorrect response $A_t$ to the query $\calQ$ at some time $t\in[m]$ throughout the stream. 

\paragraph{Related Works.}
Adversarial robustness of streaming algorithms has been an important topic of recent research. On the positive note, \cite{Ben-EliezerJWY20} gave a robust framework for estimating the $L_p$ norm of points in a stream in the insertion-only model, where previous stream updates cannot later be deleted. Their work thus shows that deletions are integral to the attack of \cite{HardtW13}. Subsequently, \cite{HassidimKMMS20} introduced a new algorithmic design for robust $L_p$ norm estimation algorithms, by using differential privacy to protect the internal randomness of algorithms against the adversary. Although \cite{WoodruffZ20} tightened these bounds, showing that essentially no losses related to the size of the input $n$ or the accuracy parameter $\eps$ were needed, \cite{KaplanMNS21} showed that this may not be true in general. Specifically, they showed a separation between oblivious and adversarial streaming in the adaptive data analysis problem. 

\cite{Ben-EliezerY20} showed that sampling is not necessarily adversarially robust; they introduce an exponentially sized set system where a constant number of samples, corresponding to the VC-dimension of the set system, may result in a very unrepresentative set of samples. 
However, they show that with an additional logarithmic overhead in the number of samples, then Bernoulli and or reservoir sampling are adversarially robust. 
This notion is further formalized by \cite{AlonBDMNY21}, who showed that the classes that are online learnable requires essentially sample-complexity proportional to the Littlestone's dimension of the underlying set system, rather than VC dimension. However, these sampling procedures are uniform in the sense that each item in the stream is sampled with the same probability. 
Thus the sampling probability of each item is \emph{oblivious} to the identity of the item. 
By contrast, we show the robustness for a variety of algorithms based on \emph{non-oblivious} sampling, where each stream item is sampled with probability roughly proportional to the ``importance'' of the item. 

\subsection{Our Contributions}
\label{sec:contributions}
Our main contribution is a powerful yet simple statement that algorithms based on non-oblivious sampling are adversarially robust if informally speaking, the process of sampling each item in the stream can be viewed as using fresh randomness independent of previous steps, even if the sampling probabilities depend on previous steps.



Let us describe, very informally, our meta-approach. Suppose we have an adversarial stream of elements given by $u_1,\ldots,u_n$. Our algorithm $\mathcal{A}$ will maintain a data structure $A_t$ at time $t$ which updates as the stream progresses. $\mathcal{A}$ will use a function $g(A_t, u_t)$ to determine the probability of sampling item $u_t$ to update $A_t$ to $A_{t+1}$. The function $g$ measures the `importance' of the element $u_t$ to the overall problem that we wish to solve. For example, if our application is $k$-means clustering and $u_t$ is a point far away from all previously seen points so far, we want to sample it with a higher probability. We highlight that even though the sampling probability for $u_t$ given by $g(A_t, u_t)$ is adversarial, since the adversary designs $u_t$ and previous streaming elements, the \emph{coin toss} performed by our algorithm $\mathcal{A}$ to keep item $u_t$ is \emph{independent} of any events that have occurred so far, including the adversary's actions. This new randomness introduced by the independent coin toss is a key conceptual step in the analysis for all of the applications listed in \figref{fig:apps}. 

Contrast this to the situation where a ``fixed'' data structure or sketch is specified upfront. In this case, we would not be adaptive to which inputs $u_t$ the adversary designs to be ``important'' for our problem which would lead us to potentially disregard such important items rendering the algorithm ineffective. 


As applications of our meta-approach, we introduce adversarially robust streaming algorithms for two central machine learning tasks, regression and clustering, as well as their more general counterparts, subspace embedding, low-rank approximation, and coreset construction. 

We show that several methods from the streaming algorithms ``toolbox'', namely merge and reduce, online leverage score sampling, and edge sampling are adversarially robust ``for free.''  
As a result, \emph{existing} (and future) streaming 
algorithms that use these tools are robust as well. 
We discuss our results in more detail below and provide a summary of our results and applications in \figref{fig:apps}.

\begin{figure}[!htb]
\centering
\newcolumntype{L}{>{\centering\arraybackslash}m{8cm}}
\begin{tabular}{c|L}
Meta-approach & Applications
\\\hline\hline
Merge and reduce (\thmref{thm:mr}) & Coreset construction, support vector machine, Gaussian mixture models, $k$-means clustering, $k$-median clustering, projective clustering, principal component analysis,  $M$-estimators, Bayesian logistic regression, generative adversarial networks (GANs), $k$-line center, $j$-subspace approximation, Bregman clustering\\\hline
Row sampling (\thmref{thm:nla:meta}) & Linear regression, generalized regression, spectral approximation, low-rank approximation, projection-cost preservation, $L_1$-subspace embedding \\\hline
Edge sampling (\thmref{thm:gs}) & Graph sparsification\\\hline
\end{tabular}
\caption{Summary of our robust sampling frameworks and corresponding applications}
\figlab{fig:apps}
\end{figure}

We first show that the well-known merge and reduce paradigm is adversarially robust. 
Since the merge and reduce paradigm defines coreset constructions, we thus obtain robust algorithms for $k$-means, $k$-median, Bregman clustering, projective clustering, principal component analysis (PCA), non-negative matrix factorization (NNMF)~\cite{BachemLK17}. 

\begin{theorem}[Merge and reduce is adversarially robust]
\thmlab{thm:mr}
Given an offline $\eps$-coreset construction, the merge and reduce framework gives an adversarially robust streaming construction for an $\eps$-coreset with high probability.
\end{theorem}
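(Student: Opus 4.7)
The plan is to exploit the fact that an offline $\eps$-coreset preserves cost for \emph{every} query simultaneously, not for a single query fixed in advance. Because this ``for all queries'' guarantee is oblivious to the query actually asked, an adversary who inspects a coreset and then adaptively chooses a query cannot do more damage than the worst-case query already accounted for. This reduces the adversarial question to controlling the joint success probability of the randomized constructions inside the merge-and-reduce tree.

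The first step is to recall the merge-and-reduce skeleton: partition the incoming stream into blocks of size $s$, apply the offline $\eps'$-coreset construction to each block to form the leaves of a binary tree, and then repeatedly merge and re-compress pairs of sibling coresets up a tree of depth $O(\log(n/s))$. By the standard telescoping argument, setting $\eps' = \Theta(\eps/\log n)$ makes the root an $\eps$-coreset of the whole stream. The crucial structural observation is that every one of the $O(n/s)$ coreset constructions created over the entire lifetime of the stream uses its \emph{own} fresh, independent batch of random bits, drawn only at the instant that node is built. Setting the per-node failure probability to $1/\poly(n)$ and union-bounding over all such nodes yields a high-probability global event $\mathcal{E}$ under which every coreset in the tree is simultaneously a valid $\eps'$-coreset for the input it was given.

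The conceptually delicate step, which I expect to be the main obstacle, is that the adversary's round-$t$ update $u_t$ depends on previous outputs and hence on the random tapes used earlier, so the input fed into each merge-and-reduce node is itself a randomized object correlated with past coin flips. The resolution is to process the tree in its natural topological order and apply the offline guarantee node-by-node: at a node $v$, its input is measurable with respect to all strictly earlier randomness and the adversary's messages up to that point, whereas $v$'s own coin flips are independent of everything before $v$. Therefore, conditioned on any realization of the history, $v$'s fresh randomness sees a \emph{fixed} input, and the offline ``for all queries'' guarantee fires with probability at least $1-\delta$ regardless of what the adversary will do later. Chaining these conditional bounds gives $\Pr[\mathcal{E}] \geq 1 - O(n\delta/s)$, and on $\mathcal{E}$ the ``for all queries'' property absorbs every adaptively chosen query the adversary may submit at any prefix, delivering an $\eps$-coreset at every time step as claimed.
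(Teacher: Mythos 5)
Your proposal is correct and follows essentially the same route as the paper: both arguments rest on the observation that each merge-and-reduce node consumes fresh randomness independent of the (adversary-influenced) history, so conditioned on the history the offline coreset guarantee applies to a fixed input, after which a union bound over all $O(n)$ nodes and the usual $\eps' = \Theta(\eps/\log n)$ telescoping finish the job. The paper packages the conditioning step as a standalone lemma (its Lemma on adversarially robust offline coreset construction) and then does a level-by-level induction, whereas you process the tree in topological order, but this is only a cosmetic difference; your extra remark about the ``for all queries'' nature of a coreset is a nice observation, though in the paper's model the query $\calQ$ is fixed at the start of the game, so it is not strictly needed here.
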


For regression and other numerical linear algebra related tasks, we consider the row arrival streaming model, in which the adversary generates a sequence of row vectors $\a_1,\ldots,\a_n$ in $d$-dimensional vector space. 
For $t\in [n]$, the $t$-th prefix of the stream induces a matrix $\A_t\in\mathbb{R}^{t\times d}$ with rows $\a_1, \dots, \a_t$. 
We denote this matrix as $\A_t=\a_1\circ\ldots\circ\a_t$ and define $\kappa$ to be an upper bound on the largest condition number\footnote{the ratio of the largest and smallest nonzero singular values} of the matrices $\A_1,\ldots,\A_n$. 
\begin{theorem}[Row sampling is adversarially robust]
\thmlab{thm:nla:meta}
There exists a row sampling based framework for adversarially robust streaming algorithms that at each time $t\in[n]$:
\begin{enumerate}
\item
Outputs a matrix $\M_t$ such that $(1-\eps)\A_t^\top\A_t \preceq \M_t^\top\M_t\preceq (1+\eps)\A_t^\top\A_t$, while sampling $\O{\frac{d^2\kappa}{\eps^2}\log n\log\kappa}$ rows (spectral approximation/subspace embedding/linear regression/generalized regression).
\item
Outputs a matrix $\M_t$ such that for all rank $k$ orthogonal projection matrices $\P\in\mathbb{R}^{d\times d}$,
\[(1-\eps)\norm{\A_t-\A_t\P}_F^2\le\norm{\M_t-\M_t\P}_F^2\le(1+\eps)\norm{\A_t-\A_t\P}_F^2,\]
while sampling $\O{\frac{dk\kappa}{\eps^2}\log n\log^2\kappa}$ rows (projection-cost preservation/low-rank approximation). 
\item
Outputs a matrix $\M_t$ such that $(1-\eps)\norm{\A_t\x}_1\le\norm{\M_t\x}_1\le(1+\eps)\norm{\A_t\x}_1$, while sampling $\O{\frac{d^2\kappa}{\eps^2}\log^2 n\log\kappa}$ rows ($L_1$ subspace embedding). 
\end{enumerate}
\end{theorem}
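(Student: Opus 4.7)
The plan is to implement each of the three claims via an online importance sampling scheme that, at the arrival of row $\a_t$, computes an importance score using only the rows currently retained in the sketch $\M_{t-1}$, and then keeps $\a_t$ (appropriately reweighted) with probability proportional to that score, amplified by a factor of $\Theta(\log n / \eps^2)$ times the relevant dimension. Specifically: for (1) use online leverage scores $\tau_t=\a_t^\top(\A_{t-1}^\top\A_{t-1})^{-1}\a_t$ (clipped at $1$); for (2) use online ridge leverage scores with ridge parameter $\lambda\approx\|\A_t-[\A_t]_k\|_F^2/k$; for (3) use online $L_1$ Lewis weights. These are the standard tools from the (non-adversarial) streaming numerical linear algebra literature, and I would invoke the known facts that the sums of these scores are $\O{d\log\kappa}$, $\O{k\log\kappa}$, and $\O{d\log n}$ respectively.

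The core of the proof is a martingale concentration argument. Let $\mathcal{F}_{t-1}$ denote the $\sigma$-algebra generated by all adversarial updates $\a_1,\dots,\a_t$ together with the algorithm's coin tosses up through time $t-1$. For each $t$, define the random matrix $\X_t$ to be the contribution of row $\a_t$ to the quadratic form $\M_t^\top\M_t - \A_t^\top\A_t$ (or the analogous object for the other two parts). The key conceptual point, and what makes the adversarial setting go through identically to the oblivious one, is that even though $\a_t$ is chosen by the adversary as a function of $\M_{t-1}$ (and hence of all prior coin tosses), the Bernoulli random variable used to decide whether to keep $\a_t$ is drawn from a fresh, independent batch $R_t$ of random bits. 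Consequently, conditioned on $\mathcal{F}_{t-1}$ and on $\a_t$, we have $\Ex{\X_t\mid\mathcal{F}_{t-1},\a_t}=0$, and $\{\X_t\}$ forms a matrix martingale difference sequence. One then applies a matrix Freedman inequality (for part (1) and (2)) or a scalar Bernstein bound with a net over the unit $L_1$ ball (for part (3)) to obtain the stated $(1\pm\eps)$ guarantees with high probability.

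To pass from a pointwise-in-$t$ statement to a uniform-over-all-$t\in[n]$ statement against an adaptive adversary, I would take a union bound over the $n$ stream steps, which is where the $\log n$ factors in the row counts enter. Summing the expected number of kept rows across the stream gives $\O{\tfrac{d\kappa}{\eps^2}\log n\log\kappa}$ per dimension for spectral approximation (note the extra $\kappa$ comes from potentially bounding the ratio between online and offline scores when the minimum singular value is small), and analogously for the other two parts; a Chernoff bound converts the expected to a high-probability bound and yields the row complexities in the theorem statement.

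The main obstacle will be carefully justifying that the standard online leverage-score (and ridge leverage-score, and Lewis-weight) analyses continue to hold verbatim when $\a_t$ may depend on $\M_{t-1}$. Concretely, one must verify that the martingale-difference property uses only the independence of the coin toss at time $t$ from $\mathcal{F}_{t-1}$ and not any obliviousness of $\a_t$, and that the variance and norm bounds needed by matrix Freedman are functionals of the online scores that can themselves be upper bounded almost surely regardless of the adversary's choices. Once this observation is made, the three parts reduce to three separate invocations of existing offline/oblivious row-sampling analyses, and the only genuinely new ingredient is the identification of what randomness is ``fresh'' in each round.
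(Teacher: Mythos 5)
Your overall strategy — set up a martingale difference sequence using the freshness of the coin toss at time $t$, apply a Freedman-type inequality, and handle the random thresholds via the condition number $\kappa$ — matches the paper's high-level plan, and you correctly identify the crucial conceptual point that $\Ex{\X_t\mid\mathcal{F}_{t-1}}=0$ requires only the independence of the sampling coin from $\mathcal{F}_{t-1}$, not obliviousness of $\a_t$. However, there is one substantive divergence that the paper treats as more than a technicality. You propose to invoke a \emph{matrix} Freedman inequality for parts (1) and (2), whereas the paper deliberately avoids this and instead proves a \emph{scalar} bound on $\bigl|\norm{\M\x}_p^p-\norm{\A\x}_p^p\bigr|$ for each fixed $\x$ via scalar Freedman, then unions over a greedily constructed $\eps$-net of $\{\A\y : \norm{\A\y}_p=1\}$ and argues by a geometric-series telescoping that accuracy on the net implies accuracy everywhere. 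The authors explicitly remark that they do not see how to run the matrix-Freedman route here: the natural martingale for a spectral bound would need to be whitened by $(\A_n^\top\A_n)^{-1/2}$, but in the adversarial setting $\A_n$ is itself a random matrix that depends on the algorithm's past coin tosses (through the adversary), so the whitening matrix is not available at analysis time, and they flag that "the resulting sequence is not a valid martingale." You acknowledge in your last paragraph that verifying the variance and norm bounds for matrix Freedman "can themselves be upper bounded almost surely regardless of the adversary's choices" is the main obstacle — but you leave that as an open item, and it is precisely where the paper says the difficulty lies. Note also that the scalar-plus-net route gives you parts (1), (2), and (3) with one unified argument: for $p=1$ a matrix concentration inequality would not apply in any case, so the net argument is needed regardless. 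A secondary, smaller discrepancy: the paper's $\log n$ in the row count comes from the oversampling factor $\alpha=\Theta(d\log n/\eps^2)$ chosen to survive the $2^d\poly(n)$ union bound over net points (and, for $p=1$, from the sum of online sensitivities $\O{d\log n\log\kappa}$), rather than from a union bound over the $n$ stream steps as you describe; the uniformity over $t\in[n]$ is handled by building the stopping condition (that the inductive hypothesis held up through time $j-1$) directly into the martingale difference $X_j$, not by a separate union over $t$.
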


%

Finally, we show that our analysis also applies to algorithms for graph sparsification for in which edges are sampled according to their ``importance''. Define $\kappa$ as the ratio of the largest and the smallest cut sizes in $G$ (see \secref{sec:gs} and Supplementary Section \ref{sec:supplementary_graphs} for exact details).

\begin{restatable}{theorem}{thmgs}
\thmlab{thm:gs}
Given a weighted graph $G = (V,E)$ with $|V| = n$ whose edges $e_1, \ldots, e_m$ arrive sequentially in a stream, there exists an adversarially robust streaming algorithm that outputs a $1\pm \eps$ cut sparsifier with $\O{ \frac{\kappa^2 n \log n}{ \eps^2}}$ edges with probability $1-1/\poly(n)$.
\end{restatable}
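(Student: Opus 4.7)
The approach is to use the online edge-sampling paradigm outlined in the meta-approach of \secref{sec:contributions}: maintain an over-estimate $\widetilde{C}_{\min}$ of the minimum cut, and upon arrival of edge $e_t$, independently include $e_t$ in the sparsifier $H_t$ with probability $p_t = \min\{1,\, c\log n/(\eps^2 \widetilde{C}_{\min})\}$, assigning it weight $w_{e_t}/p_t$ if included. The key robustness observation is that although $e_t$, $H_{t-1}$, and hence $p_t$ may all be measurable functions of the adversary's history (including prior sparsifier outputs), the coin toss used at step $t$ is fresh independent randomness, so the indicator $X_t = \mathbf{1}[e_t \text{ is sampled}]$ is an honest Bernoulli$(p_t)$ draw conditioned on the filtration up through the choice of $p_t$. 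This ensures the standard (oblivious-stream) analysis of cut sparsification transfers without modification.

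For the correctness of cut preservation, I would fix any cut $(S, V\setminus S)$ and consider the estimator $W_S = \sum_{t:\, e_t \in \partial S}(w_{e_t}/p_t) X_t$ for the true cut weight $w(\partial S)$. By the preceding observation, the centered sequence $\{(w_{e_t}/p_t)X_t - w_{e_t}\}$ forms a bounded martingale difference sequence with respect to the natural filtration, so Freedman's inequality gives $\PPr{|W_S - w(\partial S)| \geq \eps\, w(\partial S)} \leq \exp(-\Omega(\eps^2\, w(\partial S)\cdot \min_t p_t))$. Plugging in $p_t \geq c\log n/(\eps^2 C_{\min})$ yields failure probability at most $n^{-\Omega(\alpha)}$ for any cut of weight at most $\alpha C_{\min}$. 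Then I invoke Karger's cut-counting theorem (at most $n^{2\alpha}$ cuts of weight $\leq \alpha C_{\min}$), stratify by cut weight, and union bound to conclude that every one of the $2^{n-1}$ cuts is preserved within $(1\pm\eps)$ with probability $1 - 1/\poly(n)$.

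For the edge-count bound, the expected number of sampled edges is $\sum_t p_t \leq m\cdot c\log n/(\eps^2 \widetilde{C}_{\min})$, and the handshake-lemma bound $m \leq n C_{\max}/2$ turns this into $\O{\kappa n\log n/\eps^2}$ in expectation; a second Chernoff bound, justified by the same fresh-coin property, controls the actual count. The remaining factor of $\kappa$ in the statement comes from the overhead of guessing $C_{\min}$ online by geometric doubling, so that $\O{\log\kappa}$ parallel instances of the sampler are maintained until the correct guess is identified, with the one matching the true $C_{\min}$ returned as output. The main obstacle is handling the adaptivity of $p_t$ inside Freedman's inequality: the sum of conditional variances is itself a random quantity depending on the adversary's choices, but the fresh-coin property makes each conditional variance equal to $(w_{e_t}^2/p_t)(1-p_t)$ exactly as in the oblivious setting, and the deterministic upper bound $\sum_{t:\,e_t\in\partial S} w_{e_t}^2/p_t \leq w(\partial S)/\min_t p_t$ then holds once $\widetilde{C}_{\min}$ is chosen to lower-bound the $p_t$'s uniformly over the stream.
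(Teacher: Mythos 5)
Your proof is correct in outline and uses the same core robustness mechanism as the paper --- fresh per-step coins turn the centered cut estimator into a martingale with respect to the adversary-plus-algorithm filtration, Freedman's inequality then controls a fixed cut, and Karger's $n^{2\alpha}$ cut-counting bound handles the union over cuts --- but the underlying sampling scheme is genuinely different, and the comparison is instructive. You use \emph{uniform} Karger-style sampling, $p_t = \min(1, \rho/\widetilde{C}_{\min})$ for a single running lower bound $\widetilde{C}_{\min}$ on the minimum cut, whereas the paper analyzes the strength-based algorithm of \cite{graph_sparse_jin} in which $p_e = \min(1,\rho/c_e)$ with $c_e$ the connectivity of $e$ computed in the sparsifier $H$ built so far. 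In the paper's scheme $p_e$ depends on the algorithm's own past coin flips, so the proof must carry an inductive hypothesis that $H$ is an accurate sparsifier at every prefix before connectivities in $H$ can be transferred to $G$; your uniform probabilities have no such recursive dependence, which is a real simplification. More strikingly, your per-cut variance bound is $\sigma^2 \lesssim w(\partial S)\,C_{\min}/\rho$ while the paper's is $\sigma^2 \lesssim (C_G^{(m)})^2/\rho$; for a cut of value $\alpha C_{\min}$ yours is smaller by a factor $\alpha$, and it is precisely this slack that forces the paper to inflate the oversampling constant to $C=\Theta(\kappa^2)$, while a universal constant $c$ suffices in your argument. Your edge count therefore comes to $\O{\kappa n\log n\log\kappa/\eps^2}$ (the $\log\kappa$ from running $\O{\log\kappa}$ parallel doubling guesses), which in fact \emph{improves} on the paper's $\O{\kappa^2 n\log n/\eps^2}$.

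A few points to tighten. You want an \emph{under}-estimate of $C_{\min}$, not an over-estimate, so that $p_t \ge \rho/C_{\min}$ holds throughout; an over-estimate would undersample. Your claim that ``the remaining factor of $\kappa$ comes from the overhead of guessing'' is off: $\O{\log\kappa}$ parallel instances cost a $\log\kappa$ multiplicative overhead, not $\kappa$, so your bound is simply better than the stated one (which is fine). You should also say how the ``correct guess'' is identified at the end; returning the instance with the largest guess that did not overflow its space budget is one clean way. Finally, like the paper, you must be explicit that under an adaptive adversary $w(\partial S)$ is itself a random variable, so before invoking Freedman you should replace it by the deterministic bounds $\kappa_1 \le w(\partial S) \le \kappa_2$ (the paper calls this step out explicitly); plugging the random $w(\partial S)$ directly into both the threshold and the variance bound is not licensed by the statement of Freedman's inequality as given in \thmref{thm:scalar:freedman}.
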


\paragraph{Sketching vs Sampling Algorithms.}
A central tool for randomized streaming algorithms is the use of linear sketches. These methods maintain a data structure $f$ such that after the $(i+1)$-th input $x_i$, we can update $f$ by computing a linear function of $x_i$. Typically, these methods employ a random matrix. For example, if the input consists of vectors, sketching methods will use a random matrix to project the vector into a much smaller dimension space. In \cite{HardtW13}, it was proved no linear sketch can approximate the $L_2$-norm within a polynomial multiplicative factor against such an adaptive adversary. In general, streaming algorithms that use sketching are highly susceptible to the type of attack described in \cite{HardtW13} where the adversary can effectively `learn' the kernel of the linear function used and send inputs along the kernel. For example, if an adversary knows the kernel of the random matrix used to project the input points, then by sending points that lie on the kernel of the matrix as inputs, the adversary can render the whole streaming algorithm useless. 

One the other hand, we employ a different family of streaming algorithms that are based on \emph{sampling} the input rather than \emph{sketching} it. Surprisingly, this simple change allows one to automatically get many adversarially robust algorithms either ``for free'' or \emph{without} new algorithmic overheads. For more information, see Section \ref{sec:contributions}. We emphasize that while our techniques are not theoretically sophisticated, we believe its power lies in its simple message that \textbf{sampling is often superior to sketching for adversarial robustness}. In addition to downstream algorithmic and ML applications, this provides an interesting separation and trade-offs between the two paradigms; for non adversarial inputs sketching often gives similar or better performance guarantees for many tasks \cite{lowerboundsampling}.

\section{Merge and Reduce}
\seclab{sec:mr}
We show that the general merge and reduce paradigm is adversarially robust. 
Merge and reduce is widely used for the construction of a coreset, which provides dimensionality reduction on the size of an underlying dataset, so that algorithms for downstream applications can run more efficiently:
\begin{definition}[$\eps$ coreset]
Let $P\subset X$ be a set of elements from a universe $X$, $z\ge 0$, $\eps\in(0,1)$, and $(P,\dist,Q)$ be a query space. 
Then a subset $C$ equipped with a weight function $w: P\to\mathbb{R}$ is called an $\eps$-coreset with respect to the query space $(P,\dist,Q)$ if
\[(1-\eps)\sum_{\p\in P}\dist(\p,Q)^z\le\sum_{\p\in C}w(p)\dist(\p,Q)^z\le(1+\eps)\sum_{\p\in P}\dist(\p,Q)^z.\]
\end{definition}
The study of efficient offline coreset constructions for a variety of geometric and algebraic problems forms a long line of active research. 
For example, offline coreset constructions are known for linear regression, low-rank approximation, $L_1$-subspace embedding, $k$-means clustering, $k$-median clustering, $k$-center, support vector machine, Gaussian mixture models, $M$-estimators, Bregman clustering, projective clustering, principal component analysis, $k$-line center, $j$-subspace approximation, and so on. 
Thus, our result essentially shows that using the merge and reduce paradigm, these offline coreset constructions can be extended to obtain robust and accurate streaming algorithms. 
The merge and reduce paradigm works as follows. 
\begin{figure*}[tb]
\centering
\begin{tikzpicture}[scale=0.45]


\node at (-2,-1.5){Stream:};
\draw (0,-2) rectangle+(2,1);
\draw (2,-2) rectangle+(2,1);
\draw (4,-2) rectangle+(2,1);
\draw (6,-2) rectangle+(2,1);
\draw[dashed] (1,-1) -- (1,0);
\draw[dashed] (3,-1) -- (3,0);
\draw[dashed] (5,-1) -- (5,0);
\draw[dashed] (7,-1) -- (7,0);

\node at (-2,0.5){$\{C_{1,j}\}$:};
\node at (1,0.5){$C_{1,1}$};
\node at (3,0.5){$C_{1,2}$};
\node at (5,0.5){$C_{1,3}$};
\node at (7,0.5){$C_{1,4}$};
\draw (0,0) rectangle+(2,1);
\draw (2,0) rectangle+(2,1);
\draw (4,0) rectangle+(2,1);
\draw (6,0) rectangle+(2,1);
\draw[dashed] (1,1) -- (2,2);
\draw[dashed] (3,1) -- (2,2);
\draw[dashed] (5,1) -- (6,2);
\draw[dashed] (7,1) -- (6,2);

\node at (-2,2.5){$\{C_{2,j}\}$:};
\node at (2,2.5){$C_{2,1}$};
\node at (6,2.5){$C_{2,2}$};
\draw (1,2) rectangle+(2,1);
\draw (5,2) rectangle+(2,1);
\draw[dashed] (2,3) -- (4,4);
\draw[dashed] (6,3) -- (4,4);

\node at (-2,4.5){$\{C_{3,j}\}$:};
\node at (4,4.5){$C_{3,1}$};
\draw (3,4) rectangle+(2,1);
\end{tikzpicture}
\caption{Merge and reduce framework. Each $C_{1,j}$ is an $O(\eps/\log n)$-coreset of the corresponding partition of the substream and each $C_{i,j}$ is an $\eps$-coreset of $C_{i-1,2j-1}$ and $C_{i-1,2j}$ for $i>1$.}
\figlab{fig:mr}
\vspace{-5mm}
\end{figure*}
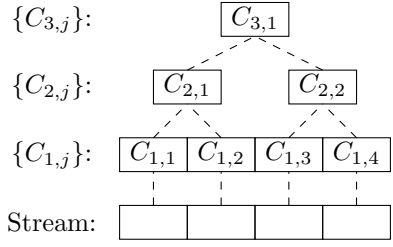
Suppose we have a stream $p_1,\ldots,p_n$ of length $n=2^k$ for some integer $k$, without loss of generality (otherwise we can use a standard padding argument to increase the length of the stream). 
Define $C_{0,j}=p_j$ for all $j\in[n]$. 
Consider $k$ levels, where each level $i\in[k]$ consists of $\frac{n}{2^i}$ coresets $C_{i,1},\ldots,C_{i,n/2^i}$ and each coreset $C_{i,j}$ is an $\frac{\eps}{2k}$-coreset of $C_{i-1,2j-1}$ and $C_{i-1,2j}$. 
Note that this approach can be implemented efficiently in the streaming model, since each $C_{i,j}$ can be built immediately once $C_{i-1,2j-1}$ and $C_{i-1,2j}$ are constructed, and after $C_{i,j}$ is constructed, then both $C_{i-1,2j-1}$ and $C_{i-1,2j}$ can be discarded. 
For an illustration of the merge and reduce framework, see \figref{fig:mr}, though we defer all formal proofs to the supplementary material. 
Using the coresets of \cite{BravermanDMMUWZ20}, \thmref{thm:mr} gives the following applications:
\begin{theorem}
\thmlab{thm:nla:coreset}
There exists a merge-and-reduce row sampling based framework for adversarially robust streaming algorithms that at each time $t\in[n]$:
\begin{enumerate}
\item
Outputs a matrix $\M_t$ such that $(1-\eps)\A_t^\top\A_t \preceq \M_t^\top\M_t\preceq (1+\eps)\A_t^\top\A_t$, while sampling $\O{\frac{d^2}{\eps^2}\log^4 n\log\kappa}$ rows (spectral approximation/subspace embedding/linear regression/generalized regression).
\item
Outputs a matrix $\M_t$ such that for all rank $k$ orthogonal projection matrices $\P\in\mathbb{R}^{d\times d}$,
\[(1-\eps)\norm{\A_t-\A_t\P}_F^2\le\norm{\M_t-\M_t\P}_F^2\le(1+\eps)\norm{\A_t-\A_t\P}_F^2,\]
while sampling $\O{\frac{k}{\eps^2}\log^4 n\log^2\kappa}$ rows (projection-cost preservation/low-rank approximation). 
\item
Outputs a matrix $\M_t$ such that $(1-\eps)\norm{\A_t\x}_1\le\norm{\M_t\x}_1\le(1+\eps)\norm{\A_t\x}_1$, while sampling $\O{\frac{d}{\eps^2}\log^5 n\log\kappa}$ rows ($L_1$ subspace embedding). 
\end{enumerate}
\end{theorem}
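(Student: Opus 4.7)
The plan is to derive Theorem~\ref{thm:nla:coreset} as a direct corollary of Theorem~\ref{thm:mr}, instantiating the merge-and-reduce framework with the three offline row-sampling coreset constructions of~\cite{BravermanDMMUWZ20}. Since Theorem~\ref{thm:mr} already establishes adversarial robustness of merge-and-reduce as a black box on top of any offline $\eps$-coreset, the only work left is a size accounting: translate the offline guarantees into the tree-structured construction of Figure~\ref{fig:mr}, propagate the error parameter correctly through the levels, and union-bound the failure probabilities.

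First I would recall the offline coreset sizes from~\cite{BravermanDMMUWZ20}: for spectral approximation/subspace embedding (and hence regression) the row sample size is $\tilde{\mathcal{O}}(d^2 \log\kappa / \eps^2)$; for projection-cost preservation/rank-$k$ low-rank approximation it is $\tilde{\mathcal{O}}(k \log^2\kappa / \eps^2)$; and for $L_1$ subspace embedding it is $\tilde{\mathcal{O}}(d \log n \log\kappa / \eps^2)$. I would then plug each bound into the merge-and-reduce template. Because errors compound multiplicatively across the $\log_2 n$ levels of the binary tree, each node $C_{i,j}$ must be constructed as an $(\eps/\Theta(\log n))$-coreset of its two children; substituting $\eps' = \eps/\Theta(\log n)$ into the offline bounds introduces a factor of $\log^2 n$ in each case.

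Next I would account for the storage at any moment in the stream. Standard merge-and-reduce analysis shows that at any time $t$ the algorithm maintains at most one active coreset per level, so $\mathcal{O}(\log n)$ coresets are alive simultaneously; multiplying the per-coreset size by $\log n$ contributes an additional $\log n$ factor. For the adversarial robustness guarantee inherited from Theorem~\ref{thm:mr}, I would set the per-coreset failure probability to $1/\poly(n)$ so that a union bound over all $\mathcal{O}(n)$ internal nodes of the merge tree (and over the $n$ rounds of interaction) still yields a high-probability guarantee; under the usual assumption that the offline constructions achieve $1-\delta$ success with only a $\log(1/\delta)$ dependence, this costs one further $\log n$ factor. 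Adding these three $\log n$ factors to the offline baselines reproduces the stated $\tilde{\mathcal{O}}(d^2 \log^4 n \log\kappa / \eps^2)$, $\tilde{\mathcal{O}}(k \log^4 n \log^2\kappa / \eps^2)$, and $\tilde{\mathcal{O}}(d \log^5 n \log\kappa / \eps^2)$ bounds.

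The main obstacle I anticipate is not conceptual but bookkeeping: one has to verify that the condition-number parameter $\kappa$, which is defined as a bound on $\A_t$ for every prefix $t$, remains a valid bound for the intermediate weighted matrices constructed inside the merge tree, since those matrices are themselves $(1\pm\eps')$-coreset reweightings rather than true row subsets of some $\A_t$. I would address this by noting that a spectral $\eps'$-coreset distorts every singular value by at most $(1\pm\eps')$, so the condition number of any intermediate $C_{i,j}$ is within a constant factor of $\kappa$ for $\eps' \le 1/2$, and absorbing that constant into the $\tilde{\mathcal{O}}$. With this caveat in place, the three bounds follow mechanically by plugging the BDMMUWZ20 sizes into Theorem~\ref{thm:mr}.
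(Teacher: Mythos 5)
Your approach matches the paper's: it states this theorem as a direct application of \thmref{thm:mr} plus the coresets of \cite{BravermanDMMUWZ20} and gives no further proof, and your log-factor accounting ($\log^2 n$ from the per-level error budget $\eps/\Theta(\log n)$, one $\log n$ from $\O{\log n}$ simultaneously live coresets, one $\log n$ from the $1/\poly(n)$ failure-probability union bound over the merge tree) is consistent with the stated bounds.

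One place your bookkeeping is subtly off is the $\kappa$ discussion. You correctly flag that it needs verification, but the fix you sketch addresses the wrong obstacle. The paper defines $\kappa$ as a bound on the condition numbers of \emph{prefixes} $\A_1,\ldots,\A_n$. An internal node $C_{i,j}$ of the merge tree is a coreset of a \emph{contiguous non-prefix block} $\a_{(j-1)2^i+1}\circ\cdots\circ\a_{j2^i}$, and there is no a priori reason such a block has condition number controlled by $\kappa$ (it could even be rank-deficient while every prefix is well-conditioned). Arguing that a spectral $\eps'$-coreset perturbs singular values by $(1\pm\eps')$ only shows that $C_{i,j}$ inherits the conditioning of its parent block; it says nothing about how that block's conditioning relates to $\kappa$. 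Since the paper itself does not spell this out, it is hard to rule on definitively, but you would either need to (i) strengthen the hypothesis so that $\kappa$ bounds all contiguous sub-blocks, or (ii) observe that the offline reduce step in \cite{BravermanDMMUWZ20} does not actually depend on the conditioning of its input (so the $\log\kappa$ in the stated bound enters elsewhere), rather than rely on the singular-value-distortion argument you give.
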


Using coresets of \cite{HuangV20}, then \thmref{thm:mr} also gives applications for $(k,z)$-clustering such as $k$-median for $z=1$ and $k$-means for $z=2$. 
Moreover, \cite{BachemLK17} noted that constructions of \cite{FeldmanL11} give coresets for Bregman clustering, which handles $\mu$-similar Bregman divergences such as the Itakura-Saito distance, KL-divergence, Mahalanobis distance, etc.
\begin{theorem}
\thmlab{thm:clustering:coreset}
There exists a merge-and-reduce importance sampling based framework for adversarially robust streaming algorithms that at each time $t$:
\begin{enumerate}
\item
Outputs a set of centers that gives a $(1+\eps)$-approximation to the optimal $(k,z)$-clustering, $k$-means clustering ($z=2$), and $k$-median clustering ($z=1$), while storing $\O{\frac{1}{\eps^{2z+2}}\,k\log^{2z+2} n\log k\log\frac{k\log n}{\eps}}$ points. 
\item
Outputs a set of centers that gives a $(1+\eps)$-approximation to the optimal $k$-Bregman clustering, while storing $\O{\frac{1}{\eps^2}\,dk^3\log^3 n}$ points.
\end{enumerate}
\end{theorem}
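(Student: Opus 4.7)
The plan is to combine the merge-and-reduce robustness result of \thmref{mr} with known offline coreset constructions for each clustering objective. The scheme itself introduces no new algorithmic ideas: once an offline $\eps'$-coreset construction is available for a particular objective, \thmref{mr} immediately lifts it to an adversarially robust streaming coreset, so all the work reduces to parameter bookkeeping.

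First I would instantiate \thmref{mr} with per-level accuracy $\eps' = \Theta(\eps/\log n)$, so that the multiplicative errors compound across the $O(\log n)$ levels of the merge-and-reduce binary tree as $(1\pm\eps')^{O(\log n)} = 1\pm O(\eps)$, preserving an overall $\eps$-coreset guarantee at the root. At any point in time the algorithm simultaneously stores one partial coreset per tree level, i.e.\ $O(\log n)$ summaries, so the total space will be the per-level coreset size multiplied by $O(\log n)$.

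Second, for the $(k,z)$-clustering item, I would substitute the offline coreset construction of Huang and Vishnoi, whose $\eps'$-coreset size is polynomial in $k$ and $1/\eps'$ with additional $\log k$ and $\log(k/\eps')$ factors and the correct $\eps'^{-(2z+2)}$ dependence. Plugging $\eps' = \eps/\log n$ into their bound and multiplying by the $O(\log n)$ simultaneously stored summaries yields the claimed $\O{\frac{1}{\eps^{2z+2}} k\log^{2z+2} n \log k \log(k\log n/\eps)}$ bound, with the specializations $z=1$ ($k$-median) and $z=2$ ($k$-means) falling out as special cases. For the Bregman item, I would use the Feldman-Langberg coreset construction, recast by Bachem-Lucic-Krause for $\mu$-similar Bregman divergences, whose offline $\eps'$-coreset size is $\O{dk^3/\eps'^2}$. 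Substituting $\eps' = \eps/\log n$ and accounting for the tree overhead gives the claimed $\O{\frac{dk^3\log^3 n}{\eps^2}}$ bound.

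The main (and essentially only) obstacle is parameter bookkeeping: tracking the polylogarithmic factors carefully when substituting $\eps' = \eps/\log n$ into the offline coreset sizes, and verifying that no additional $\log n$ factors creep in beyond the claimed exponents, in particular that the $\log(k/\eps')$ term in Huang-Vishnoi cleanly becomes the stated $\log(k\log n/\eps)$ term. Because \thmref{mr} handles adversarial robustness in a black-box manner with respect to the underlying offline coreset construction, this theorem is a plug-and-play application of existing offline results into the adversarially robust merge-and-reduce framework.
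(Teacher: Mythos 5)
Your proposal is correct and takes essentially the same approach as the paper: instantiate \thmref{thm:mr} with per-level accuracy $\Theta(\eps/\log n)$, then plug in the offline coreset bounds of Huang--Vishnoi for $(k,z)$-clustering and the Feldman--Langberg construction (as adapted by Bachem--Lucic--Krause for $\mu$-similar Bregman divergences). The paper gives no more detail than this either, treating the theorem as an immediate black-box consequence of \thmref{thm:mr} together with the cited offline coreset sizes, exactly as you do.
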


Using the sensitivity bounds of \cite{VaradarajanX12a, VaradarajanX12b} and the coreset constructions of \cite{BravermanFL16}, then \thmref{thm:mr} also gives applications for the following shape fitting problems:
\begin{theorem}
\thmlab{thm:shape:coreset}
There exists a merge-and-reduce importance sampling based framework for adversarially robust streaming algorithms that at each time $t$:
\begin{enumerate}
\item
Outputs a set of lines that gives a $(1+\eps)$-approximation to the optimal $k$-lines clustering, while storing $\O{\frac{d}{\eps^2}\,f(d,k)k^{f(d,k)}\log^4 n}$ points of $\mathbb{R}^d$, for a fixed function $f(d,k)$. 
\item
Outputs a subspace that gives a $(1+\eps)$-approximation to the optimal dimension $j$ subspace approximation, while storing $\O{\frac{d}{\eps^2}\,g(d,j)k^{g(d,j)}\log^4 n}$ points of $\mathbb{R}^d$, for a fixed function $g(d,j)$. 
\item
Outputs a set of subspaces that gives a $(1+\eps)$-approximation to the optimal $(j,k)$-projective clustering, while storing $\O{\frac{d}{\eps^2}\,h(d,j,k)\log^3 n(\log n)^{h(d,j,k)}}$ points of $\mathbb{R}^d$s, for a fixed function $h(d,j,k)$, for a set of input points with integer coordinates. 
\end{enumerate}
\end{theorem}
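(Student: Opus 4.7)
The plan is to reduce the adversarial guarantee to two ingredients: (i) the standard oblivious error-compounding analysis for merge-and-reduce, and (ii) a clean argument that the success probability of each individual offline coreset construction is preserved in the adversarial setting because it uses \emph{fresh} randomness. I will set the per-node failure probability of the offline construction to $\delta = 1/\poly(n)$ and show that, with high probability, \emph{all} nodes of the merge-and-reduce tree produce valid coresets of their inputs simultaneously; once this happens, the deterministic ``for all queries'' property of coresets takes over, and the output at every time $t$ is an $\eps$-coreset of the prefix $p_1,\ldots,p_t$.

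First I would fix notation: let each $C_{i,j}$ be built by running the offline $\eps'$-coreset subroutine on $C_{i-1,2j-1}\cup C_{i-1,2j}$, where $\eps' = \Theta(\eps/\log n)$, and let $E_{i,j}$ denote the event that $C_{i,j}$ is indeed an $\eps'$-coreset of this input. The subtle step is the adversarial union bound over all $O(n)$ events $E_{i,j}$. For this I condition on the entire transcript $\mathcal{H}_{i,j}$ at the moment just before $C_{i,j}$ is constructed, which includes every past stream update (adaptively chosen by the adversary from earlier outputs) and every random bit used so far. The algorithm then draws a \emph{fresh, independent} batch of random bits $R_{i,j}$ to build $C_{i,j}$. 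Since the offline construction guarantees $\Pr[E_{i,j}\mid \text{input}] \geq 1-\delta$ on any fixed input, and the input $C_{i-1,2j-1}\cup C_{i-1,2j}$ is a function of $\mathcal{H}_{i,j}$ (hence fixed once we condition), we get $\Pr[E_{i,j}\mid \mathcal{H}_{i,j}] \geq 1-\delta$. Iterating and taking a union bound yields $\Pr[\bigcap_{i,j} E_{i,j}] \geq 1 - O(n\delta)$, which is $1-1/\poly(n)$.

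Next, I would observe the conceptual core: the coreset property is a \emph{for-all-queries} statement, so once $E_{i,j}$ holds it cannot be ``undone'' by future adversarial behavior. In particular, the adversary may later examine any $C_{i,j}$ and design new stream elements that try to exploit it, but those new elements only affect future nodes of the tree (which have their own fresh randomness), and they cannot cause the already-constructed $C_{i,j}$ to cease being an $\eps'$-coreset of its own input. Conditioned on $\bigcap E_{i,j}$, a standard induction on the level $i$ then shows that each $C_{i,j}$ is a $((1+\eps')^i - 1)$-coreset of the union of its $2^i$ leaf descendants; at level $i = \log n$ with $\eps' = \eps/(2\log n)$, this is at worst an $\eps$-coreset. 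The output at an arbitrary time $t$ is the merge of the at most $\log n$ ``active'' coresets corresponding to the binary expansion of $t$, and the same multiplicative composition argument shows this merge is an $\eps$-coreset of $p_1,\ldots,p_t$.

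The main obstacle I anticipate is making the adversarial independence step rigorous rather than hand-wavy: one must carefully specify the filtration with respect to which the random bits $R_{i,j}$ are fresh, and verify that the input to the $(i,j)$-th construction is measurable with respect to the past filtration so that the conditional success probability is indeed $1-\delta$. Once this bookkeeping is set up cleanly, the remainder (the union bound and the level-by-level compounding of $\eps'$-errors into an $\eps$-error) is essentially the same as the classical oblivious analysis of merge-and-reduce and requires no new ideas.
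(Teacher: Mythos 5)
Your proposal follows essentially the same strategy as the paper: the paper proves Theorem~\ref{thm:mr} by (i) arguing that each offline coreset construction remains correct with probability $1-\delta$ in the adversarial setting because it consumes fresh randomness independent of the adaptively chosen inputs (Lemma~\ref{lem:adv:coreset}), (ii) union-bounding over the $O(n)$ nodes of the merge-and-reduce tree, and (iii) compounding the $\eps/(2k)$ per-level error over $k=\log n$ levels to get a $(1+\eps)$ guarantee, and then obtains the stated space bounds for $k$-lines, $j$-subspace, and projective clustering by plugging in the offline sensitivity-based coreset constructions of the cited works. Your filtration-based conditioning is a slightly more explicit version of the paper's argument, and your observation about merging the $O(\log n)$ active coresets corresponding to the binary expansion of the prefix length $t$ is a small refinement the paper glosses over (it only analyzes the root $C_{k,1}$), but the core decomposition, union bound, and level-by-level error compounding are identical.
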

Adversarially robust approximation algorithms for Bayesian logistic regression, Gaussian mixture models, generative adversarial networks (GANs), and support vector machine can be obtained from \thmref{thm:mr} and coreset constructions of \cite{HugginsCB16, FeldmanKW19, SinhaZGBLO20, TukanBFR20}; a significant number of additional applications of \thmref{thm:mr} using coreset constructions can be seen from recent surveys on coresets, e.g., see~\cite{BachemLK17,Feldman20}. 
The merge-and-reduce framework also has applications to a large number of other problems such as finding heavy-hitters~\cite{MisraG82} or frequent directions~\cite{GhashamiLPW16} and in various settings, such as the sliding window model~\cite{DatarGIM02}, time decay models~\cite{BravermanLUZ19} or for at-the-time or back-in-time queries~\cite{ShiZP0P21}.  

\section{Adversarial Robustness of Subspace Embedding and Applications}
\seclab{sec:nla}
We use $[n]$ to represent the set $\{1,\ldots,n\}$ for an integer $n>0$. 
We typically use bold font to denote vectors and matrices. 
For a matrix $\A$, we use $\A^{-1}$ to denote the Moore-Penrose inverse of $\A$. 
We first formally define the goals of our algorithms:
\begin{problem}[Spectral Approximation]
\label{specral_approx}
Given a matrix $\A\in\mathbb{R}^{n\times d}$ and an approximation parameter $\eps>0$, the goal is to output a matrix $\M\in\mathbb{R}^{m\times d}$ with $m\ll n$ such that $(1-\eps)\norm{\A\x}_2\le\norm{\M\x}_2\le(1+\eps)\norm{\A\x}_2$ for all $\x\in\mathbb{R}^d$ or equivalently, $(1-\eps)\A^\top\A\preceq\M^\top\M\preceq(1+\eps)\A^\top\A$. 
\end{problem}
We note that linear regression is a well-known specific application of spectral approximation. 
\begin{problem}[Projection-Cost Preservation]
Given a matrix $\A\in\mathbb{R}^{n\times d}$, a rank parameter $k>0$, and an approximation parameter $\eps>0$, the goal is to find a matrix $\M\in\mathbb{R}^{m\times d}$ with $m\ll n$ such that for all rank $k$ orthogonal projection matrices $\P\in\mathbb{R}^{d\times d}$,
\[(1-\eps)\norm{\A-\A\P}_F^2\le\norm{\M-\M\P}_F^2\le(1+\eps)\norm{\A-\A\P}_F^2.\]
\end{problem}

\noindent
Note if $\M$ is a projection-cost preservation of $\A$, then its best low-rank approximation can be used to find a projection matrix that gives an approximation of the best low-rank approximation to $\A$. 

\begin{problem}[Low-Rank Approximation]
Given a matrix $\A\in\mathbb{R}^{n\times d}$, a rank parameter $k>0$, and an approximation parameter $\eps>0$, find a rank $k$ matrix $\M\in\mathbb{R}^{n\times d}$ such that $(1-\eps)\norm{\A-\A_{(k)}}_F^2\le\norm{\A-\M}_F^2\le(1+\eps)\norm{\A-\A_{(k)}}_F^2$, where $\A_{(k)}$ for a matrix $\A$ denotes the best rank $k$ approximation to $\A$. 
\end{problem}

\begin{problem}[$L_1$-Subspace Embedding]
Given a matrix $\A\in\mathbb{R}^{n\times d}$ and an approximation parameter $\eps>0$, the goal is to output a matrix $\M\in\mathbb{R}^{m\times d}$ with $m\ll n$ such that $(1-\eps)\norm{\A\x}_1\le\norm{\M\x}_1\le(1+\eps)\norm{\A\x}_1$ for all $\x\in\mathbb{R}^d$.
\end{problem}

We consider the general class of row sampling algorithms, e.g., \cite{CohenMP16, BravermanDMMUWZ20}. 
Here we maintain a $L_p$ subspace embedding of the underlying matrix by approximating the online $L_p$ sensitivities of each row as a measure of importance to perform sampling. 
For more details, see \algref{alg:robust:spectral}. 

\begin{definition}[Online $L_p$ Sensitivities]
For a matrix $\A=\a_1\circ\ldots\circ\a_n\in\mathbb{R}^{n\times d}$, the online sensitivity of row $\a_i$ for each $i\in[n]$ is the quantity $\max_{\x\in\mathbb{R}^d}\frac{|\langle\a_i,\x\rangle|^p}{\|A_i\x\|_p^p}$, where $\A_{i-1}=\a_1\circ\ldots\circ\a_{i-1}$. 
\end{definition}

\begin{algorithm}[!htb]
\caption{Row sampling based algorithms, e.g., \cite{CohenMP16, BravermanDMMUWZ20}}
\alglab{alg:robust:spectral}
\begin{algorithmic}[1]
\Require{A stream of rows $\a_1,\ldots,\a_n\in\mathbb{R}^d$, parameter $p>0$, and an accuracy parameter $\eps>0$}
\Ensure{A $(1+\eps)$ $L_p$ subspace embedding.}
\State{$\M\gets\emptyset$}
\State{$\alpha\gets\frac{Cd}{\eps^2}\log n$ with sufficiently large parameter $C>0$}
\For{each row $\a_i$, $i\in[n]$}
\If{$\a_i\in\Span(\M)$}
\State{$\tau_i\gets2\cdot \max_{\x\in\mathbb{R}^d,\x\in\Span(\M)}\frac{|\langle\a_i,\x\rangle|^p}{\|\M\x\|_p^p+|\langle\a_i,\x\rangle|^p}$}
\Comment{See \remref{remark:scores}}
\Else
\State{$\tau_i\gets 1$}
\EndIf
\State{$p_i\gets\min(1, \alpha\tau_i)$}
\State{With probability $p_i$, $\M\gets\M\circ\frac{\a_i}{p_i^{1/p}}$}
\Comment{Online sensitivity sampling}
\EndFor
\State{\Return $\M$}
\end{algorithmic}
\end{algorithm}
We remark on standard computation or approximation of the online $L_p$ sensitivities, e.g., see~\cite{CohenEMMP15, CohenMP16, CohenMM17, BravermanDMMUWZ20}. 
\begin{remark}
\remlab{remark:scores}
We note that for $p=1$, a constant fraction approximation to any online $L_p$ sensitivity $\tau_i$ such that $\tau_i>\frac{1}{\poly(n)}$ can be computed in polynomial time using (offline) linear programming while for $p=2$, $\tau_i$ is equivalent to the online leverage score of $\a_i$, which has the closed form expression $\a_i^\top(\A_i^\top\A_i)^{-1}\a_i$, which can be approximated by $\a_i^\top(\M^\top\M)^{-1}\a_i$, conditioned on $\M$ being a good approximation to $\A_{i-1}$ when $\a_i$ is in the span of $\M$. 
Otherwise, $\tau_i$ takes value $1$ when $\a_i$ is not in the span of $\M$.
\end{remark}

\begin{lemma}[Adversarially Robust $L_p$ Subspace Embedding and Linear Regression]
\lemlab{lem:robust:lp}
Given $\eps>0$, $p\in\{1,2\}$, and a matrix $\A\in\mathbb{R}^{n\times d}$ whose rows $\a_1,\ldots,\a_n$ arrive sequentially in a stream with condition number at most $\kappa$, there exists an adversarially robust streaming algorithm that outputs a $(1+\eps)$ spectral approximation with high probability. 
The algorithm samples $\O{\frac{d^2\kappa^2}{\eps^2}\log n\log\kappa}$ rows for $p=2$ and $\O{\frac{d^2\lambda^2}{\eps^2}\log^2 n\log\kappa}$ rows for $p=1$, with high probability, where $\lambda$ is a ratio between upper and lower bounds on $\|\A\|_1$. 
\end{lemma}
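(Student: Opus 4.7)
The plan is to analyze \algref{alg:robust:spectral} and observe that the standard martingale-based analysis of online sensitivity sampling goes through essentially verbatim in the adversarial setting, because the only probabilistic statement invoked at each round is the concentration of a single fresh Bernoulli coin, independent of the adversary's entire history. Let $\mathcal{F}_{i-1}$ denote the $\sigma$-algebra generated by all stream updates $\a_1,\ldots,\a_{i-1}$, all outputs through round $i-1$, and all random bits consumed through round $i-1$. The adversary's next row $\a_i$ and the score $\tau_i$ computed from $\M_{i-1}$ and $\a_i$ are measurable given $\mathcal{F}_{i-1}$ together with $\a_i$, while the indicator $\xi_i$ of whether $\a_i$ is sampled satisfies $\PPr{\xi_i=1\mid\mathcal{F}_{i-1},\a_i}=p_i$ using a fresh random bit independent of $\mathcal{F}_{i-1}$, so the usual martingale structure is preserved against an adaptive adversary.

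For $p=2$ I would define $\X_i=\tfrac{\xi_i}{p_i}\a_i\a_i^\top-\a_i\a_i^\top$ (taking $\X_i=\mathbf{0}$ when $p_i=1$), so that $\Ex{\X_i\mid\mathcal{F}_{i-1},\a_i}=\mathbf{0}$ and the partial sums $\M_t^\top\M_t-\A_t^\top\A_t=\sum_{j\le t}\X_j$ form a matrix martingale. The operator norm of each increment is controlled by $\norm{\a_i\a_i^\top}/p_i\lesssim\tfrac{\eps^2}{d\log n}\,\lambda_{\max}(\A_{i-1}^\top\A_{i-1})$ via the leverage-score interpretation of $\tau_i$, and the predictable quadratic variation by $\sum_j \tau_j\cdot\lambda_{\max}(\A_{j-1}^\top\A_{j-1})$. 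A matrix Freedman inequality (Tropp) then yields $\norm{\M_t^\top\M_t-\A_t^\top\A_t}\le\eps\,\lambda_{\max}(\A_t^\top\A_t)$ with probability $1-1/\poly(n)$ at a fixed $t$, and a union bound over $t\in[n]$ finishes the spectral case. For $p=1$ I would combine the same fresh-coin martingale structure with an $\eps$-net argument: for each $\x$ in a net of size $(O(1/\eps))^d$ chosen on a well-conditioned basis of $\M$, the scalar differences $Y_i=\tfrac{\xi_i}{p_i}|\langle\a_i,\x\rangle|-|\langle\a_i,\x\rangle|$ form a scalar martingale difference sequence to which Freedman's inequality applies, after which net-to-sphere rounding transfers the bound to all $\x\in\mathbb{R}^d$.

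The sample complexity then follows from $p_i=\min(1,\alpha\tau_i)$ together with the classical bounds $\sum_i\tau_i=O(d\log\kappa)$ for $p=2$ and $O(d\log n\log\lambda)$ for $p=1$. The main obstacle is that $\tau_i$ is only approximated from the sketch $\M_{i-1}$ rather than from $\A_{i-1}$, so the scores driving the sampling are accurate precisely as long as $\M_{i-1}$ is itself a spectral approximation of $\A_{i-1}$, which is the very invariant we are trying to establish. I would handle this by a bootstrap induction: decompose the stream into $O(\log\kappa)$ epochs across which $\lambda_{\max}(\A_t^\top\A_t)$ doubles, apply Freedman within each epoch so that the spectral-approximation invariant is reestablished at each epoch boundary, and note that inside an epoch the approximate sensitivities computed from $\M_{i-1}$ are within a constant factor of the true online sensitivities. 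The extra $\kappa^2$ factor in the stated row count absorbs both the slack in the sensitivity approximation and the uniform union bound over epochs, net points, and stream times, while the adversarial aspect is controlled throughout by the fresh-coin observation of the first paragraph.
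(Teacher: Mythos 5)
Your fresh-coin observation, the use of Freedman's inequality, the $\eps$-net argument for $p=1$, and the appeal to online-sensitivity sum bounds for the row count are all the right ingredients and match the paper's proof in spirit. However, there are two genuine departures worth flagging, one of which the paper explicitly warns against.

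First, for $p=2$ you propose a matrix martingale $\M_t^\top\M_t-\A_t^\top\A_t$ and matrix Freedman. The paper does \emph{not} do this; it handles $p=1$ and $p=2$ uniformly by fixing a test vector $\x$, running scalar Freedman on $Y_j=\norm{\M_j\x}_p^p-\norm{\A_j\x}_p^p$, and then taking a union bound over a greedily constructed $\eps$-net of the column space. The paper explicitly remarks that it is ``not clear how to execute a similar strategy using the Matrix Freedman's Inequality'': the worry is that the comparison matrix $\A_j$ (and hence any normalization such as $(\A_j^\top\A_j)^{-1/2}$, as well as the variance proxy $\W_n$) is itself a random matrix measurable with respect to the history of sampling decisions, so it is not a fixed reference against which to state a deterministic threshold for the tail bound. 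The scalar route sidesteps this by sandwiching $\norm{\A\x}_p^p$ between deterministic quantities $\kappa_1^p\norm{\x}_p^p$ and $\kappa_2^p\norm{\x}_p^p$, which is what introduces the $\kappa$-dependence in the row count. If you do want to pursue the matrix route you need to argue carefully that both the uniform increment bound $R$ and the variance proxy $\sigma^2$ can be made deterministic functions of $\kappa_1,\kappa_2$ alone, and account for the extra $\kappa$ powers this costs; the paper chooses not to go down this road.

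Second, the ``bootstrap induction by epochs'' is not what the paper does, and it is also unnecessary. The paper folds the inductive hypothesis directly into the martingale by defining the increment $X_j$ to be $0$ whenever $Y_{j-1}>\eps\norm{\A_{j-1}\x}_p^p$ (a stopped/self-bounding martingale), so that the approximate sensitivity $\tau_j$ computed from $\M_{j-1}$ can be related to the true online sensitivity via the not-yet-violated invariant; there is no need for an explicit epoch decomposition or for reestablishing the invariant at boundaries. This is a cleaner mechanism for exactly the obstacle you identified, and it is reused verbatim in the adversarial setting (Lemma~\ref{lem:l2:downsamplerobust}) because the stopping time and $p_j$ are $\mathcal{F}_{j}$-measurable.

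Finally, stating that the sample complexity ``follows from'' the sensitivity sum bounds is too quick in the adversarial setting: the $p_j$ are adversary-influenced random variables, so the paper runs a \emph{second} martingale (with increments $W_j=\mathds{1}[\text{sampled}]-p_j$) and applies Freedman again to concentrate the number of sampled rows around $\sum_j p_j\lesssim\alpha\sum_j\tau_j$. You should spell out this second concentration step rather than invoke the expectation bound directly.
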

We also show robustness of row sampling for low-rank approximation by using online ridge-leverage scores. 
Together, \lemref{lem:robust:lp} and \lemref{lem:robust:lra} give \thmref{thm:nla:meta}. 
\begin{lemma}[Adversarially Robust Low-Rank Approximation]
\lemlab{lem:robust:lra}
Given accuracy parameter $\eps>0$, rank parameter $k>0$, and a matrix $\A\in\mathbb{R}^{n\times d}$ whose rows $\a_1,\ldots,\a_n$ arrive sequentially in a stream with condition number at most $\kappa$, there exists an adversarially robust streaming algorithm that outputs a $(1+\eps)$ low-rank approximation with high probability. 
The algorithm samples $\O{\frac{kd\kappa^2}{\eps^2}\log n\log\kappa}$ rows with high probability.
\end{lemma}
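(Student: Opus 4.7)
The plan is to mirror the argument behind \lemref{lem:robust:lp}, swapping online leverage scores for online ridge leverage scores with ridge parameter $\lambda \approx \|\A_t - (\A_t)_{(k)}\|_F^2/k$. As in the spectral approximation case, the conceptual heart is that even though the sampling probability $p_i$ is determined by the adversary's past updates and the algorithm's past coin flips, the Bernoulli trial for row $\a_i$ is drawn fresh and is independent of the filtration $\mathcal{F}_{i-1}$ up to time $i$. The reweighted outer products therefore form a matrix martingale to which the matrix Freedman inequality applies.

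\textbf{Key steps.} First, I invoke the standard Cohen--Musco--Musco reduction: a matrix $\M$ satisfying the regularized spectral approximation
\[(1-\eps)(\A^\top\A + \lambda\I) \preceq \M^\top\M + \lambda\I \preceq (1+\eps)(\A^\top\A + \lambda\I)\]
with $\lambda = \|\A - \A_{(k)}\|_F^2/k$ is automatically a $(1+O(\eps))$ projection-cost preserver, whose top-$k$ SVD in turn yields a $(1+\eps)$ rank-$k$ low-rank approximation. This reduces the lemma to producing the regularized spectral approximation at each prefix $t \in [n]$. Second, I set $p_i = \min(1, \alpha \tau_i)$, where $\tau_i$ is a constant-factor approximation to the online ridge leverage score of $\a_i$ against the running sketch, and $\alpha = \Theta(\eps^{-2}\log n)$. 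Letting $Z_i$ be the sampling indicator, define the martingale difference
\[\X_i = (\A_t^\top\A_t + \lambda\I)^{-1/2}\left(\tfrac{Z_i}{p_i}\a_i\a_i^\top - \a_i\a_i^\top\right)(\A_t^\top\A_t + \lambda\I)^{-1/2}.\]
Because $\Ex{Z_i \mid \mathcal{F}_{i-1}} = p_i$, we have $\Ex{\X_i \mid \mathcal{F}_{i-1}} = 0$; the ridge-score definition yields $\|\X_i\|_{\mathrm{op}} \le 1/\alpha$, and the telescoping identity $\sum_i \a_i^\top(\A_i^\top\A_i + \lambda\I)^{-1}\a_i = O(k\log\kappa)$ gives the required predictable-variance bound.

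Third, matrix Freedman produces $\|\sum_i \X_i\|_{\mathrm{op}} \le \eps$ with probability $1 - 1/\poly(n)$, which is exactly the regularized spectral approximation at time $t$, and a union bound over the at most $n$ prefixes upgrades this to every prefix simultaneously. The expected sample count is $\sum_i p_i \le \alpha \sum_i \tau_i + d = \tO{k\log\kappa/\eps^2} + d$; the remaining $d\kappa^2$ factor in the stated bound is absorbed by the overhead of robustifying the score approximation against adversarially chosen directions and by the $O(\log\kappa)$ parallel copies maintained across candidate values of $\lambda$ described below.

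\textbf{Main obstacle.} The principal subtlety is not the matrix-concentration step, which is mechanical, but that the correct ridge parameter depends on the tail mass $\|\A_t - (\A_t)_{(k)}\|_F^2$, an unknown quantity that evolves along the stream. I handle this by running $O(\log\kappa)$ parallel copies of the sampler, one for each geometrically spaced guess of $\lambda$, and at query time selecting the copy whose guess is consistent with the running sketch's estimate of the tail mass. A secondary subtlety is that the online ridge score $\tau_i$ must be approximated from the current sketch $\M_{i-1}$ rather than from $\A_{i-1}$ itself, so the analysis must thread an inductive invariant—namely that $\M_{i-1}$ is already a valid regularized approximation to $\A_{i-1}$—through the Freedman bound without circularity. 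This is standard in the offline setting, but requires care here because $\a_i$ is chosen by the adversary only after $\M_{i-1}$ is exposed.
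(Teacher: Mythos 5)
Your high-level reduction is the right one and matches the paper: invoke the Cohen--Musco--Musco correspondence between regularized spectral approximation (with ridge parameter $\lambda = \|\A - \A_{(k)}\|_F^2/k$) and projection-cost preservation, then sample rows proportionally to online ridge leverage scores, and control the sample count via the bound $\sum_i \tau_i = \O{k \log n \log \kappa}$. The intuition you emphasize --- that the Bernoulli coin for row $\a_i$ is fresh even though $p_i$ is adversarially determined --- is also the conceptual core of the paper's argument.

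However, your concentration step has a genuine gap, and it is exactly the pitfall the paper explicitly flags and routes around. You propose the matrix martingale
\[
\X_i = (\A_t^\top\A_t + \lambda\I)^{-1/2}\Bigl(\tfrac{Z_i}{p_i}\a_i\a_i^\top - \a_i\a_i^\top\Bigr)(\A_t^\top\A_t + \lambda\I)^{-1/2}
\]
and assert $\Ex{\X_i \mid \mathcal{F}_{i-1}} = 0$ because $\Ex{Z_i \mid \mathcal{F}_{i-1}} = p_i$. This would be valid if the normalization $(\A_t^\top\A_t + \lambda\I)^{-1/2}$ were $\mathcal{F}_{i-1}$-measurable, so that it could be pulled out of the conditional expectation. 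In the oblivious setting it is (it is a constant). In the adversarial setting it is not: for $i < t$, the adversary chooses $\a_{i+1},\dots,\a_t$ \emph{after} observing the realization of $Z_i$, so $\A_t$ and $\lambda = \lambda_t$ are random variables correlated with $Z_i$. The conditional expectation does not factor, and $\{\X_i\}$ is not a martingale difference sequence with respect to $\mathcal{F}_i$. The paper calls this out almost verbatim: it notes that one cannot ``execute a similar strategy using the Matrix Freedman's Inequality'' precisely because the normalizing matrix is a function of the sampling outcomes.

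The paper's workaround is to fix a single direction $\x$, run scalar Freedman on the martingale $Y_j = \|\M_j \x\|_p^p - \|\A_j \x\|_p^p$ (whose differences involve only the scalar $|\a_j^\top \x|^p$ and the coin $Z_j$), and then union-bound over an $\eps$-net. Even there, $\|\A\x\|_p^p$ is a random variable, so the concentration threshold and variance proxy must be replaced by \emph{deterministic} bounds $\kappa_1^p\|\x\|_p^p$ and $\kappa_2^p\|\x\|_p^p$ derived from the assumed condition-number bound $\kappa$. This is where the $\kappa^2$ factor in the sample complexity actually originates --- it is the price of using a fixed normalization in Freedman's inequality rather than the data-dependent one. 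Your proposal instead attributes the $\kappa^2$ to vaguely stated ``overhead of robustifying the score approximation,'' which both misidentifies the source and papers over the broken martingale. To repair the argument you would need to either (i) switch to the paper's scalar-Freedman-plus-$\eps$-net route with the $\kappa_1,\kappa_2$ deterministic normalization applied to the regularized quadratic form $\|\A_i\x\|_2^2 + \lambda\|\x\|_2^2$, or (ii) find a way to replace $(\A_t^\top\A_t + \lambda\I)^{-1/2}$ with a predictable (e.g., deterministic) normalization while retaining a relative-error guarantee, which is not straightforward.
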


\section{Graph Sparsification}
\seclab{sec:gs}
In this section, we highlight how the sampling paradigm gives rise to an adversarially robust streaming algorithm for graph sparsification. First, we motivate the problem of graph sparsification. Massive graphs arise in many theoretical and applied settings, such as in the analysis of large social or biological networks. A key bottleneck in such analysis is the large computational resources, in both memory and time, needed. Therefore, it is desirable to get a representation of graphs that take up far less space while still preserving the underlying ``structure" of the graph. Usually the number of vertices is much fewer than the number of edges; for example in typical real world graphs, the number of vertices can be several orders of magnitude smaller than the number of edges (for example, see the graph datasets in \cite{nr}). Hence, a natural benchmark is to reduce the number of edges to be comparable to the number of vertices. 

The most common notion of graph sparsification is that of preserving the value of \emph{all} cuts in the graph by keeping a small weighted set of edges of the original graph. More specifically, suppose our graph is $G = (V,E)$ and for simplicity assume all the edges have weight $1$. A cut of the graph is a partition of $V = (C, V \setminus C)$ and the value of a cut, $\text{Val}_G(C)$, is defined as the number of edges that cross between the vertices in $C$ and $V \setminus C$. A graph $H$ on the same set of vertices as $V$ is a sparsifier if it preserves the value of every cut in $G$ and has a few number of weighted edges. For a precise formulation, see \probref{problem:graph_sparse}. 

In addition to being algorithmically tractable, this formulation is natural since it preserves the underlying cluster structure of the graph. For example, if there are two well connected components separated by a sparse cut, i.e.\ two distinct communities, then the sparsifier according to the definition above will ensure that the two communities are still well separated. Conversely, by considering any cut within a well connected component, it will also ensure that any community remains well connected (for more details, see \cite{graph_sparse_clustering} and references therein). Lastly, graph sparsification has been considered in other frameworks such as differential privacy \cite{EliasKKL20}, distributed optimization \cite{graph_sparse_distributed}, and even learning graph sparsification using deep learning methods \cite{graph_sparse_nn}. The formal problem definition of graph sparsification is as follows.


\begin{problem}[Graph Sparsification]\problab{problem:graph_sparse} Given a graph weighted $G = (V,E)$ with $|V| = n, |E| = m,$ and an approximation parameter $\eps > 0$, compute a weighted subgraph $H$ of $G$ on the same set of vertices such that
\begin{enumerate}
    \item every cut in $H$ has value between $1-\eps$ and $1+\eps$ times its value in $G$:
   $(1-\eps) \textup{Val}_G(C) \le \textup{Val}_H(C) \le (1+\eps) \textup{Val}_G(C)$
    for all cuts $C$ where $\textup{Val}_G(C), \textup{Val}_H(C)$ denotes the cost of the cut in the graphs $G$ and $H$ respectively and for the latter quantity, the edges are weighted, 
    \item the number of edges in $H$ is $\O{\frac{n \log n}{\eps^2}}$.
\end{enumerate}
\end{problem}

Ignoring dependence on $\eps$, there are previous results that already get sparsifiers $H$ with $O(n \log n)$ edges \cite{graph_sparse_karger, graph_sparse_spielman}. Their setting is when the \emph{entire} graph is present up-front in memory. In contrast, we are interested in the streaming setting where future edges can depend on past edges as well as revealed randomness of an algorithm while processing the edges. 

Our main goal is to show that the streaming algorithm from \cite{graph_sparse_jin} (presented in \algref{alg:robust:graph_sparse} in the supplementary section), which uses a sampling procedure to sample edges in a stream, is adversarially robust, albeit with a slightly worse guarantee for the number of edges. Following the proof techniques of the non streaming algorithm given in \cite{graph_sparse_karger}, it is shown in \cite{graph_sparse_jin} that \algref{alg:robust:graph_sparse} outputs a subgraph $H$ such that $H$ satisfies the conditions of \probref{problem:graph_sparse} with probability $1-1/\poly(n)$ where the probability can be boosted by taking a larger constant $C$. We must show that this still holds true if the edges of the stream are adversarially chosen, i.e., when \textbf{new edges in the stream depend on the previous edges and the randomness used by the algorithm so far.} 
We thus again use a martingale argument; the full details are given in Supplementary Section \ref{sec:supplementary_graphs}. As in \secref{sec:nla}, we let $\kappa_1$ and $\kappa_2$ to be deterministic lower/upper bounds on the size of any cut in $G$ and define $\kappa = \kappa_2/\kappa_1$.

\thmgs*

\section{Experiments}\label{sec:experiments}
To illustrate the robustness of importance-sampling-based streaming algorithms we devise adversarial settings for clustering and linear regression. 
With respect to our adversarial setting, we show that the performance of a merge-and-reduce based streaming $k$-means algorithm is robust while a popular streaming $k$-means implementation (not based on importance sampling) is not. 
Similarly, we show the robustness superiority of a streaming linear regression algorithm based on row sampling over a popular streaming linear regression implementation and over sketching. 

\paragraph{Streaming $k$-means}
In this adversarial clustering setting we consider a series of point batches where all points except those in the last batch are randomly sampled from a two dimensional standard normal distribution and points in the last batch similarly sampled but around a distant center
(see the data points realization in both panels of Figure \ref{fig:skm}). 
We then feed the point sequence to \texttt{StreamingKMeans}, the streaming $k$-means implementation of Spark \cite{spark} the popular big-data processing framework. As illustrated in the left panel of Figure \ref{fig:skm}, the resulting two centers are both within the origin.
Now, this result occurs regardless of the last batch's distance from the origin, implying that the per-sample loss performance of the algorithm can be made arbitrarily large. 
Alternatively, we used a merge-and-reduce based streaming $k$-means algorithm and show that one of the resulting cluster centers is at the distant cluster (as illustrated in the right panel of Figure \ref{fig:skm}) thereby keeping the resulting per sample loss at the desired minimum. Specifically we use \texttt{Streamkm}, an implementation of StreamKM++ \cite{streamkmpp} from the ClusOpt Core library \cite{clusoptcore}.  

\begin{figure*}[!ht]
\centering
\subfigure{\includegraphics[width=.45\textwidth]{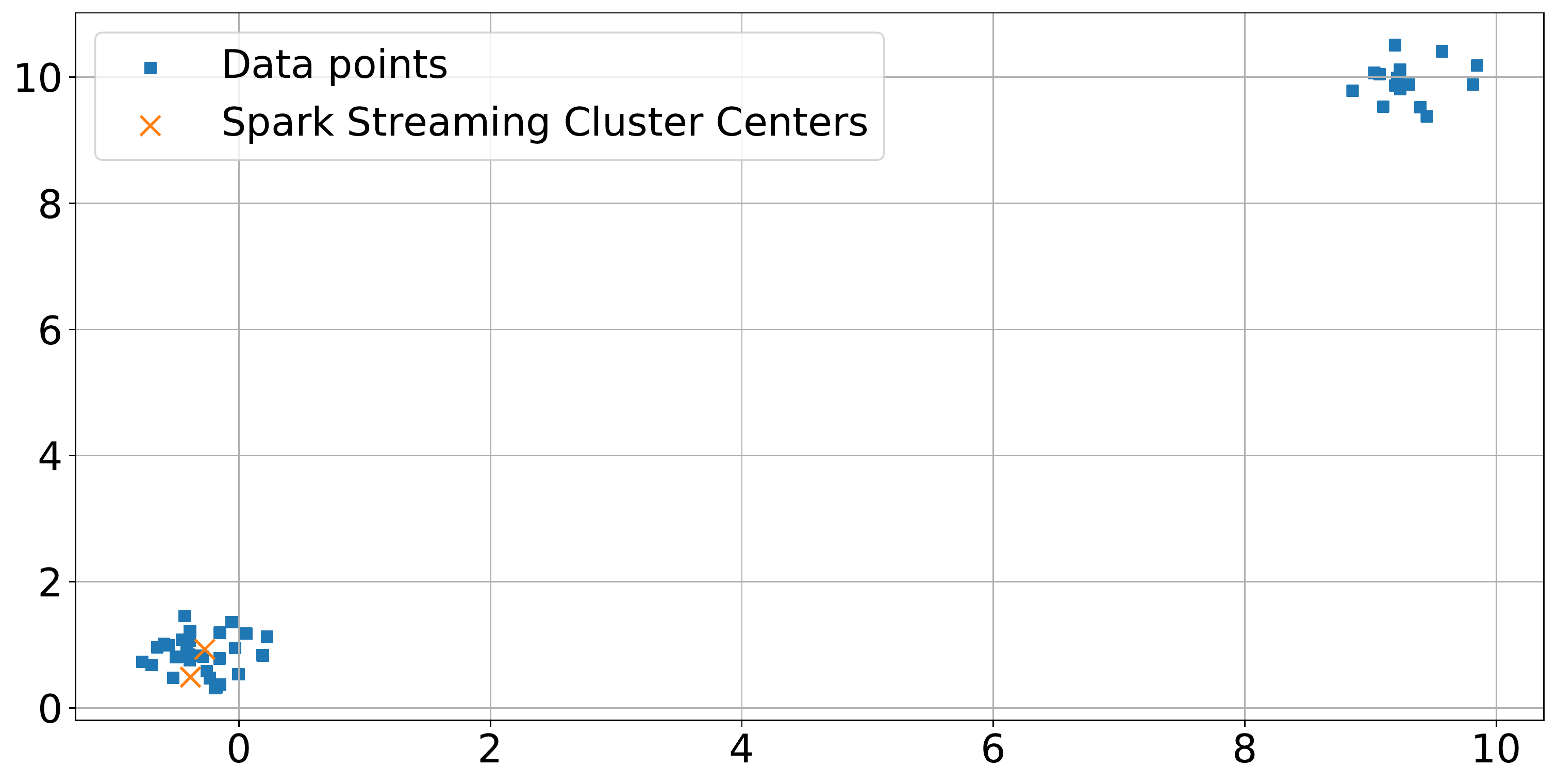}\hfill}
\subfigure{\includegraphics[width=.45\textwidth]{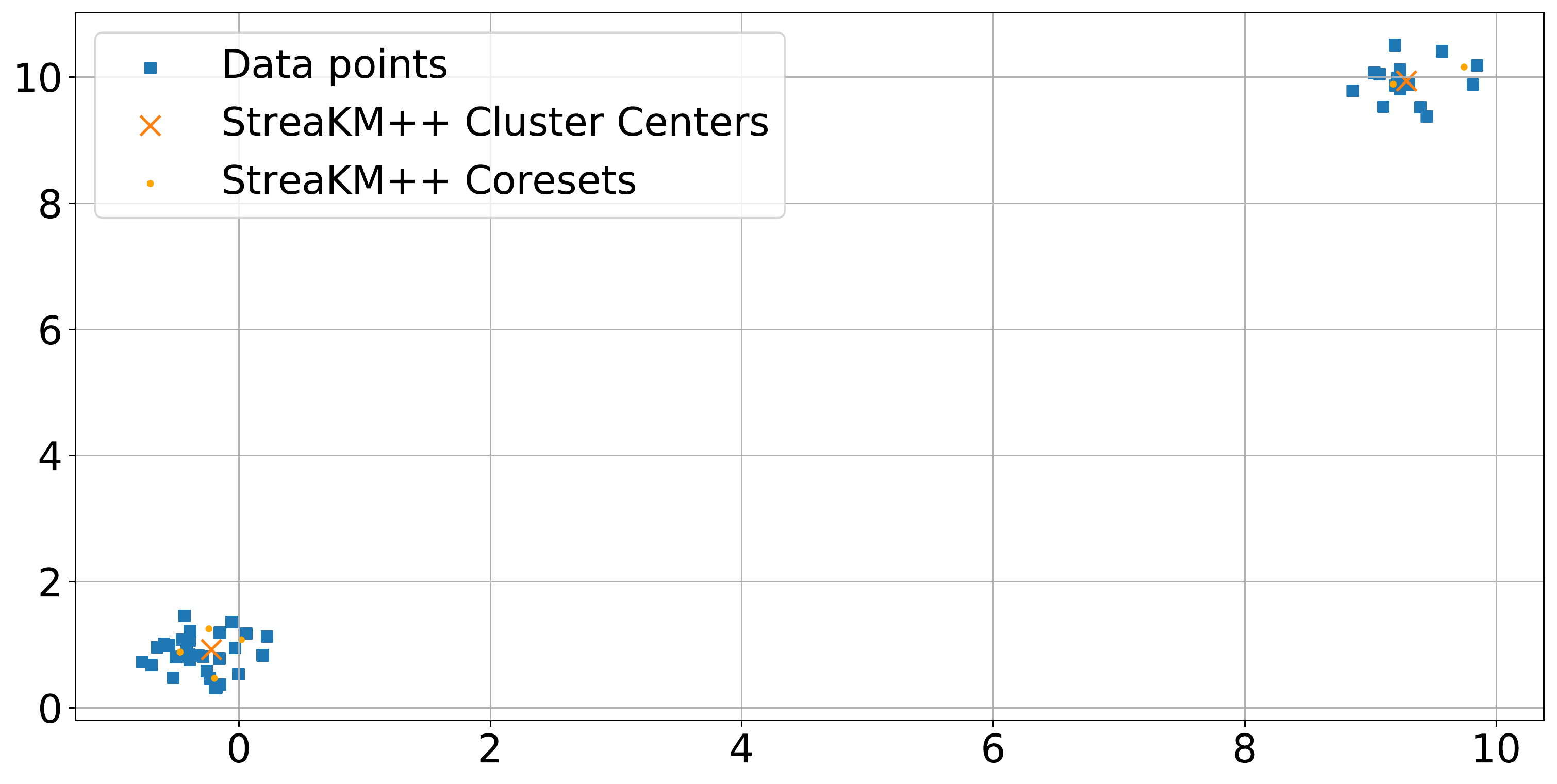}}
\vspace{-0.15in}
\caption{Resulting cluster centers (x) on our adversarial datapoints setting for the popular Spark implementation (left) and StreamKM++ (right).}
\label{fig:skm}
\vspace{-3mm}
\end{figure*}

\noindent\textbf{Streaming linear regression.}
Similar to the clustering setting, in the adversarial setting for streaming linear regression all batches except the last one are sampled around a constellation of four points in the plane such that the optimal regression line is of $-1$ slope through the origin (see the leftmost panel of Figure \ref{fig:slr}). The last batch of points however, is again far from the origin $(L, L)$ such that the resulting optimal regression line is of slope $1$ through the origin\footnote{
For MSE loss, this occurs for $L$ at least the square root of the number of batches.
}.
We compare the performance of \texttt{LinearRegression} from the popular streaming machine learning library River \cite{2020river} to our own row sampling based implementation of streaming linear regression along the lines of \algref{alg:robust:spectral} and observe the following:
Without the last batch of points, both implementations result in the optimal regression line, however, the River implementation reaches that line only after several iterations,
while our implementation is accurate throughout (This is illustrated in the second-left panel of Figure \ref{fig:slr}). When the last batch is used, nevertheless, 
\algref{alg:robust:spectral} picks up the drastic change and adapts immediately to a line of the optimal slope (the blue line of the second right panel of Figure \ref{fig:slr}) while the River implementation update merely moves the line in the desired direction (the orange line in that same panel) but is far from catching up. Finally, the rightmost panel of Figure \ref{fig:slr}) details the loss trajectory for both implementations. While the River loss skyrockets upon the last batch, the loss of \algref{alg:robust:spectral} remains relatively unaffected, illustrating its adversarial robustness.

\begin{figure*}[!ht]
\centering
\subfigure{\includegraphics[width=.48\textwidth]{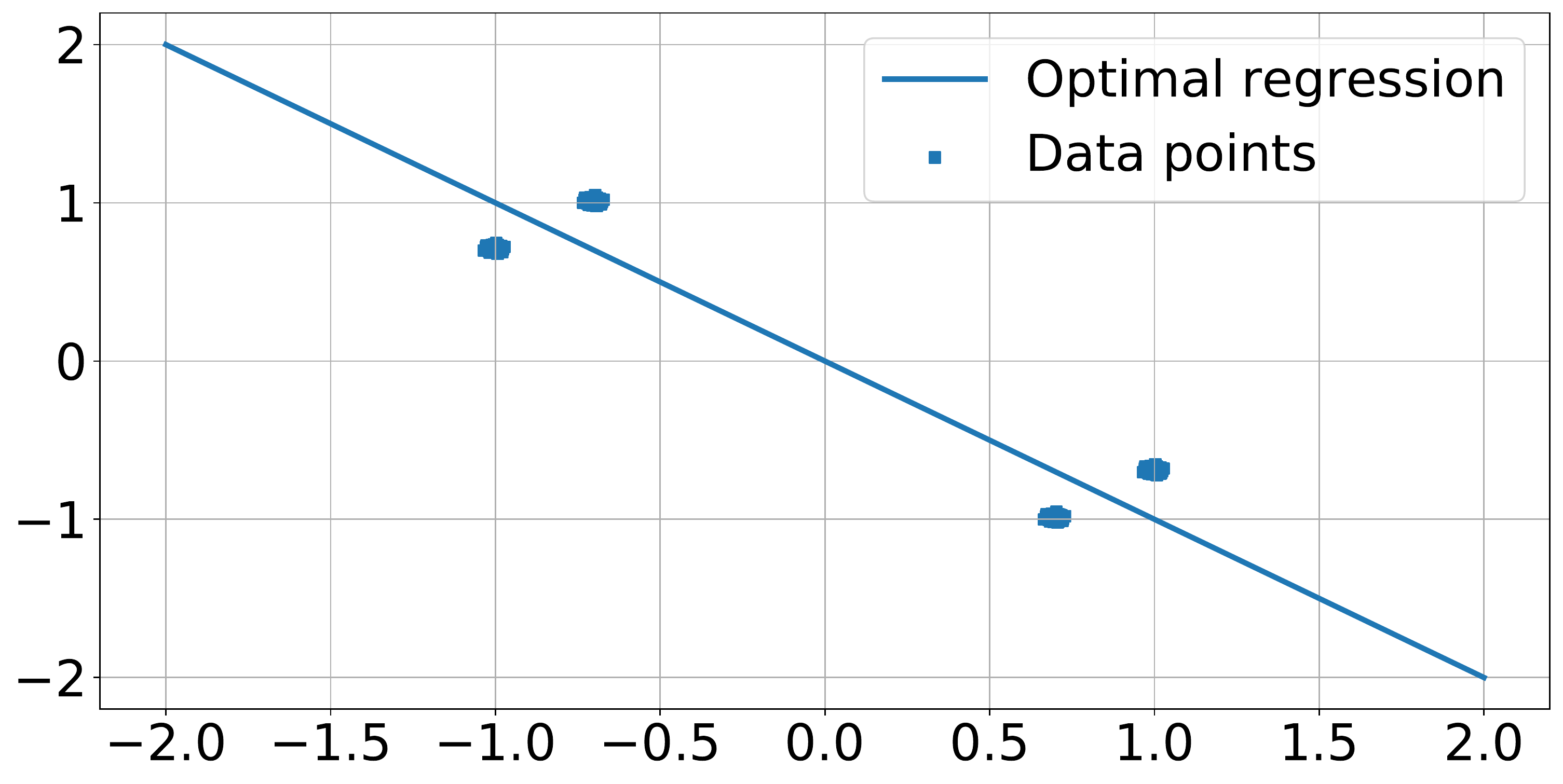}\hfill}
\subfigure{\includegraphics[width=.48\textwidth]{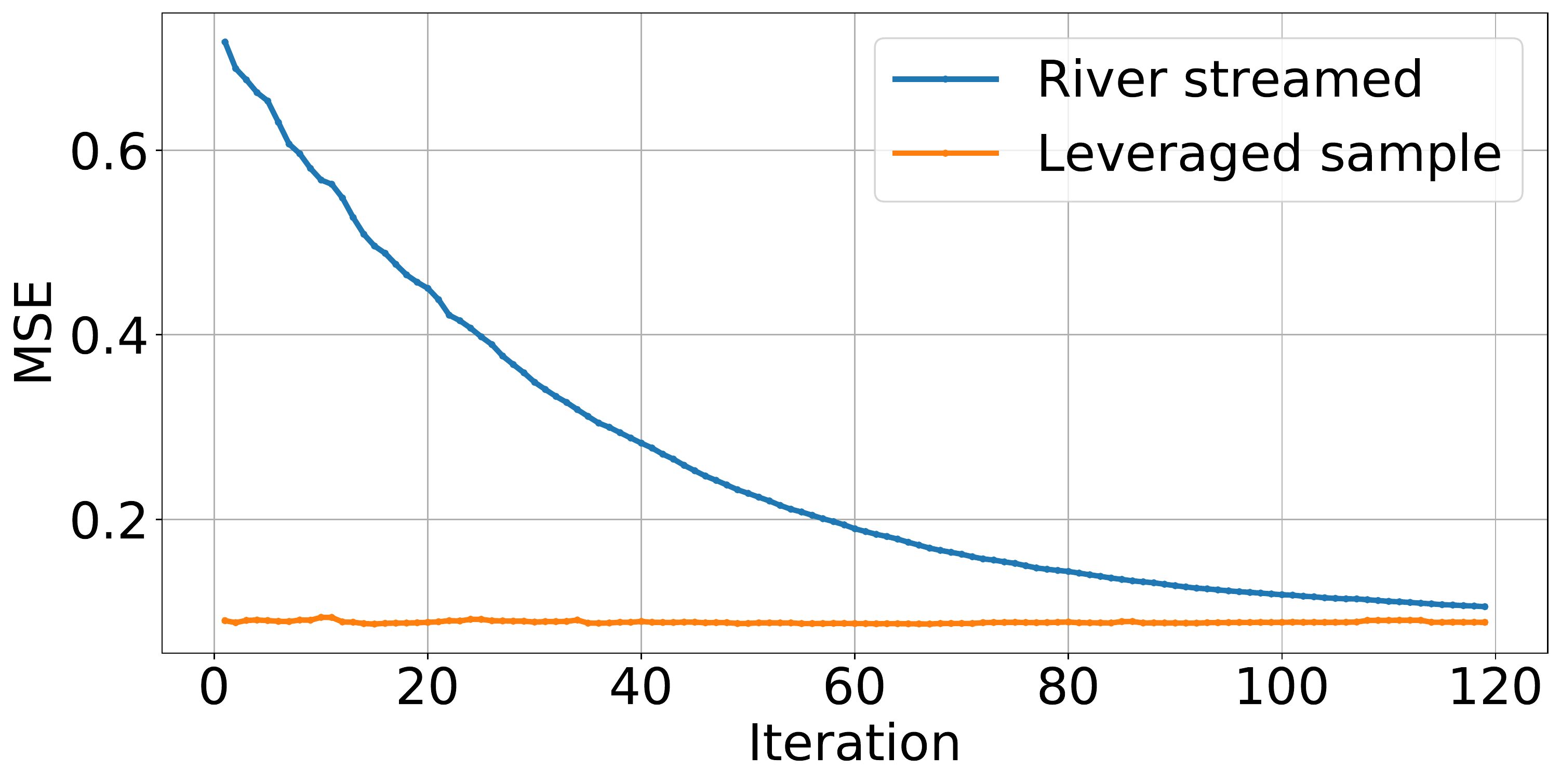}\hfill}\\
\subfigure{\includegraphics[width=.48\textwidth]{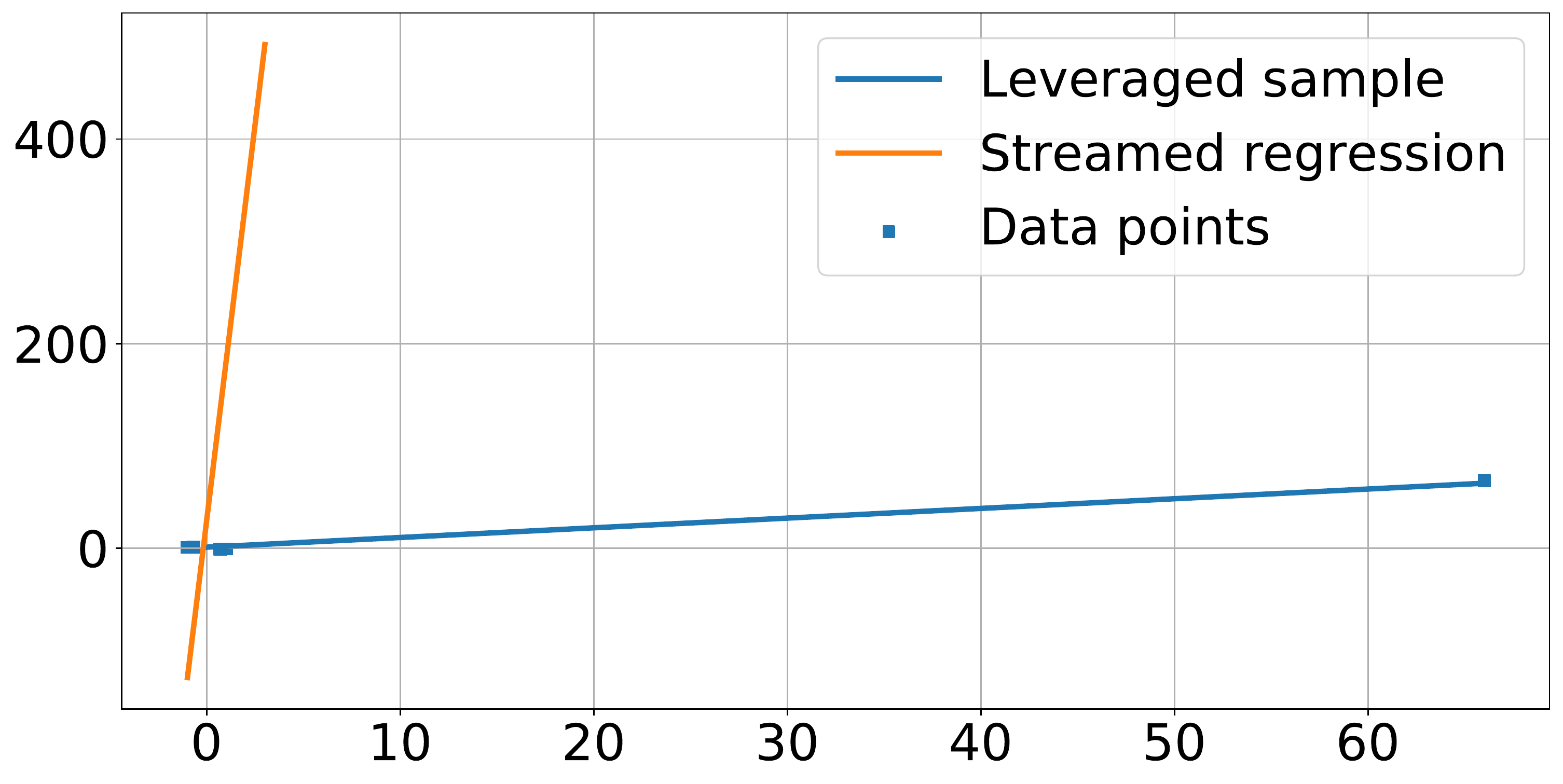}\hfill}
\subfigure{\includegraphics[width=.48\textwidth]{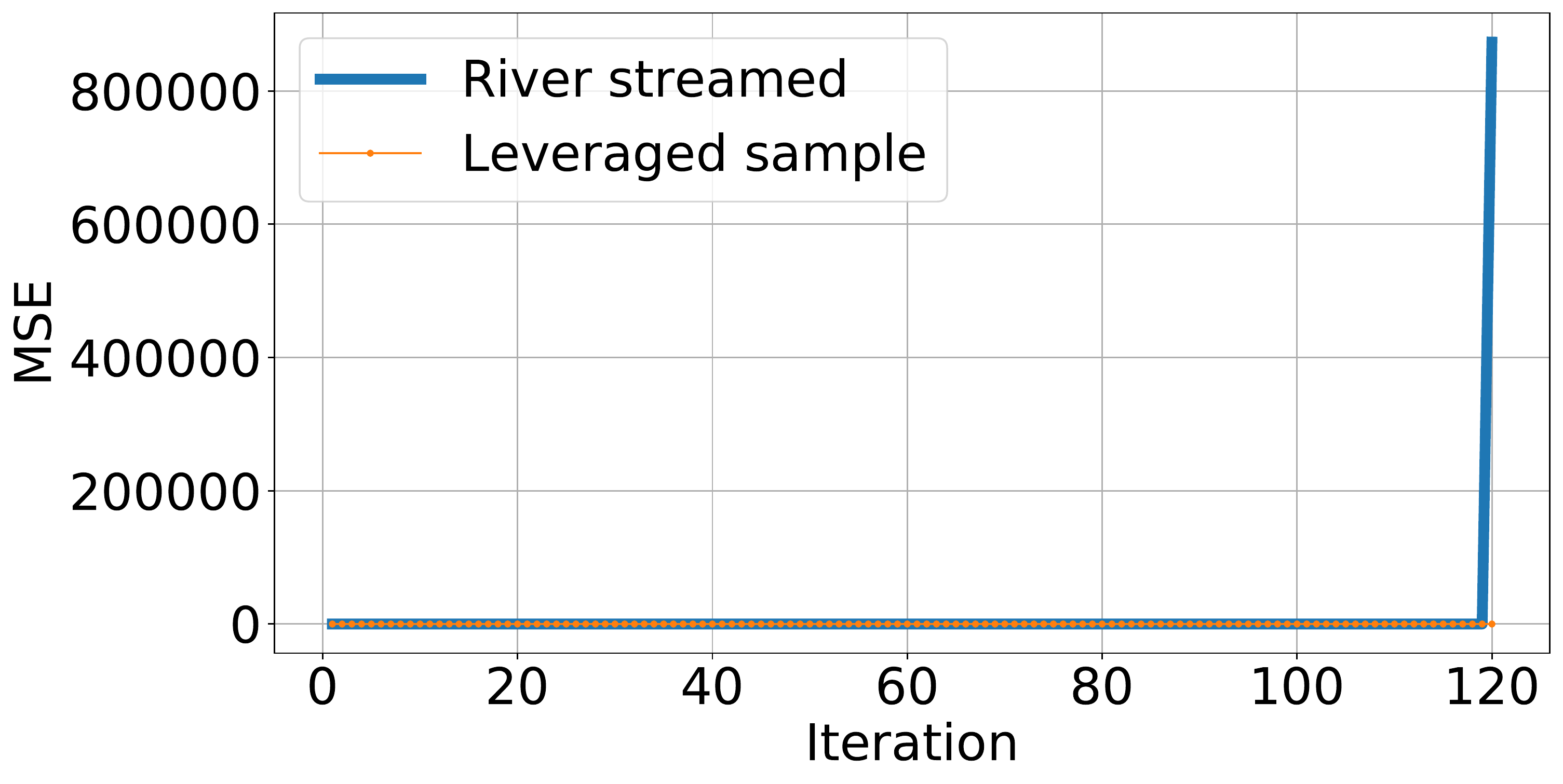}}
\vspace{-2mm}
\caption{Streaming linear regression experiment (from left to right): points constellation without last batch, loss trajectory up to (not including) last batch, resulting regression lines upon training with last batch, and loss trajectory including the last batch.}
\label{fig:slr}
\end{figure*}

Note that in both the clustering and linear regression settings above the adversary was not required to consider the algorithms internal randomization to achieve the desired effect (this is due to the local nature of the algorithms computations). This is no longer the case in the following last setting. 

\paragraph{Sampling vs. sketching.}
Finally, we compare the performance of the leverage sampling \algref{alg:robust:spectral} to sketching. In this setting, for a random unit sketching matrix $S$ (that is, each of its elements is sampled from $\{-1, 1\}$ with equal probability), we create an adversarial data stream
$A$ such that its columns are in the null space of $S$. 
As a result, the linear regression as applied to the sketched data $S\cdot A$ as a whole is unstable and might significantly differ from the resulting linear regression applied to streamed prefixes of the sketched data.
As illustrated in Figure \ref{fig:sketch}, this is not the case when applying the linear regression to the original streamed data $A$ using \algref{alg:robust:spectral}. Upon the last batch, the performance of the sketching-based regression deteriorates by orders of magnitude, while the performance of \algref{alg:robust:spectral} is not affected. Moreover, the data reduction factor achieved by leveraged sampling\footnote{The original stream $A$ contained $2000$ samples, each of dimension $10$.} is almost double compared to the data reduction factor achieved by sketching.

\begin{figure*}[!ht]
\centering
\subfigure{\includegraphics[width=.49\textwidth]{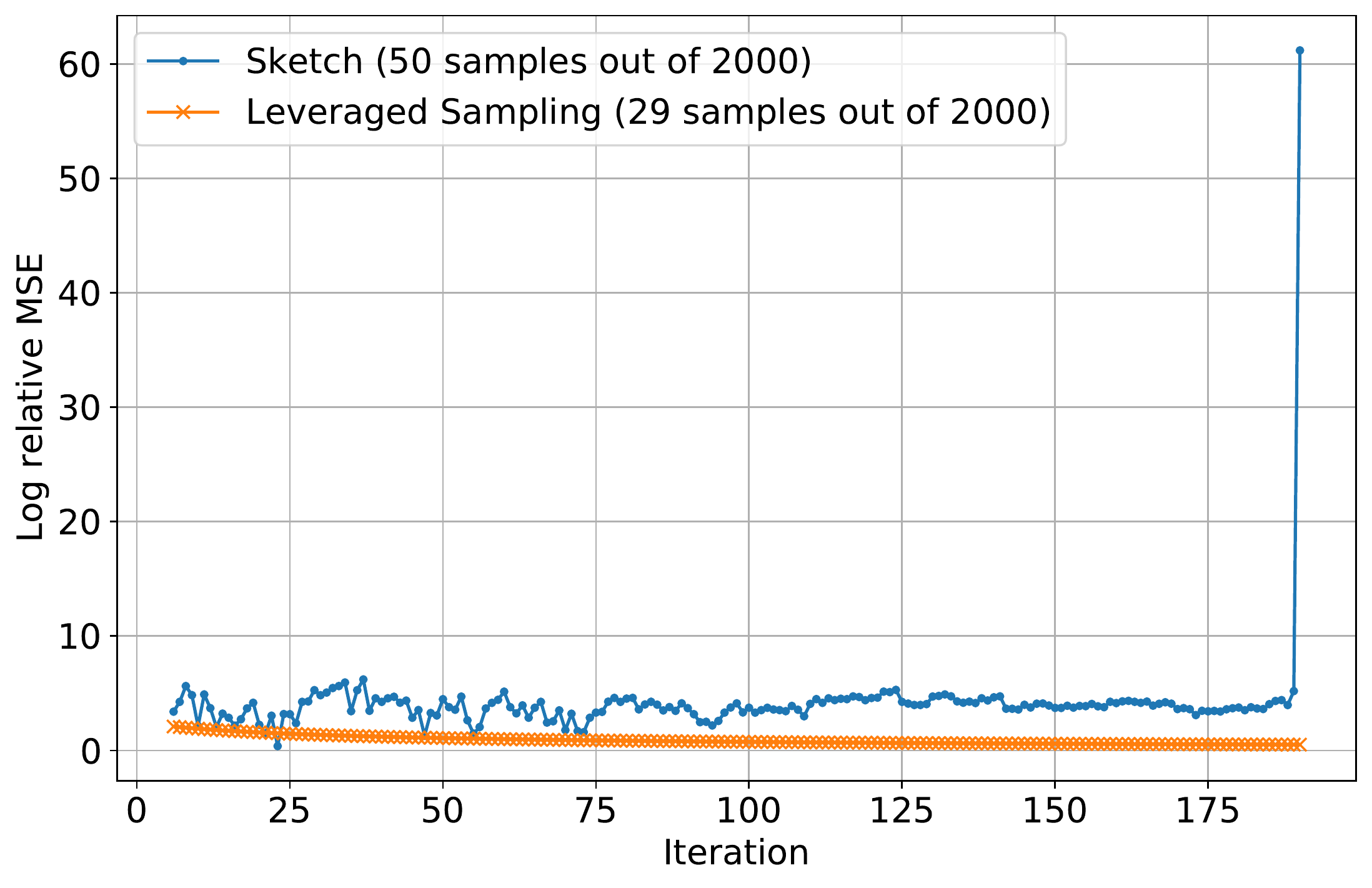}\hfill}
\vspace{-0.15in}
\caption{Comparing the performance trajectory of leveraged sampling \algref{alg:robust:spectral} to sketching, for an adversarial data stream tailored to a sketching matrix. Sketching performance deteriorates catastrophically upon the last batch, while leveraged sampling remains robust.}
\label{fig:sketch}
\end{figure*}

\section*{Acknowledgments}
Sandeep Silwal was supported in part by a NSF Graduate Research Fellowship Program. 
Samson Zhou was supported by a Simons Investigator Award of David P. Woodruff.

\def\shortbib{0}
\bibliographystyle{alpha}
\bibliography{references}

\appendix
\section{Missing Proofs from \secref{sec:mr}}
In this section, we give the full details of the statements in \secref{sec:mr}. 
Coreset constructions are known for a variety of problems, e.g., in computational geometry~\cite{FeldmanMSW10, FeldmanL11, BravermanFL16, BachemLK17, SohlerW18, BravermanLUZ19, HuangV20, Feldman20}, linear algebra~\cite{BravermanDMMUWZ20}, machine learning~\cite{MunteanuSSW18,BaykalLGFR19, MussayOBZF20}. 
We first show that coreset construction is adversarially robust by considering the merge and reduce framework. 
For example, consider the offline coreset construction through sensitivity sampling. 
\begin{lemma}[Lemma 2.3 in \cite{BachemLK17}]
\lemlab{lem:sensitivity}
Given $\eps>0$ and $\delta\in(0,1)$, let $P$ be a set of weighted points, with non-negative weight function $\mu:P\to\mathbb{R}_{\ge0}$ and let $s:P\to\mathbb{R}^{\ge0}$ denote an upper bound on the sensitivity of each point. 
For $S=\sum_{p\in P}\mu(p)s(p)$, let $m=\Omega\left(\frac{S^2}{\eps^2}\left(d'+\log\frac{1}{\delta}\right)\right)$, where $d'$ is the pseudo-dimension of the query space.  
Let $C$ be a sample of $m$ points from $P$ with replacement, where each point $p\in P$ is sampled with probability $q(p)=\frac{\mu(p)s(p)}{S}$ and assigned the weight $\frac{\mu(p)}{m\cdot q(p)}$ if sampled. 
Then $C$ is an $\eps$-coreset of $P$ with probability at least $1-\delta$. 
\end{lemma}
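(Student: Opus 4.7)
The plan is to prove Lemma~\ref{lem:sensitivity} by a standard importance-sampling argument combined with a uniform-convergence bound for the function class induced by the query space. Throughout, for each query $Q$ I will write $f_Q(p) = \dist(p,Q)^z$, $\mathrm{cost}(P,Q) = \sum_{p \in P} \mu(p) f_Q(p)$, and for a sampled multiset $C = \{p_1,\ldots,p_m\}$ the coreset estimate $\mathrm{cost}(C,Q) = \sum_{i=1}^{m} \frac{\mu(p_i)}{m\, q(p_i)} f_Q(p_i)$. The goal is to show that, with probability at least $1-\delta$, $|\mathrm{cost}(C,Q) - \mathrm{cost}(P,Q)| \le \eps\, \mathrm{cost}(P,Q)$ for every query $Q$ simultaneously.

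First I would fix a single query $Q$ and verify the pointwise estimate. Direct computation shows $\E[\mathrm{cost}(C,Q)] = \mathrm{cost}(P,Q)$ because $q(p) = \mu(p)s(p)/S$ makes $\mu(p)/q(p) = S/s(p)$ and the sampling is with replacement. The key structural fact is that by the definition of sensitivity, $\mu(p) f_Q(p) \le s(p) \cdot \mathrm{cost}(P,Q)$ for every $p$, so each per-sample contribution $\frac{\mu(p_i)}{m q(p_i)} f_Q(p_i) = \frac{S}{m s(p_i)} f_Q(p_i)$ is bounded by $\frac{S}{m}\cdot\mathrm{cost}(P,Q)/\mu(p_i) \cdot (\mu(p_i)/1)$; more cleanly, after normalizing by $\mathrm{cost}(P,Q)$ the summands live in $[0, S/m]$. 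Hoeffding's inequality then gives a failure probability of $\exp(-\Omega(m \eps^2/S^2))$ for a single query.

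Second, I would upgrade this pointwise statement to a uniform statement over all queries $Q \in \mathcal{Q}$ via the pseudo-dimension $d'$ of the (normalized) function class $\mathcal{F} = \{p \mapsto \frac{f_Q(p)}{s(p)} : Q \in \mathcal{Q}\}$. Using a standard Haussler/Pollard-type uniform convergence theorem for importance sampling (for example, the VC/pseudo-dimension bound applied to bounded nonnegative functions, as in Li--Long--Srinivasan or Feldman--Langberg), the sample complexity needed for an $\eps/S$-uniform additive approximation of all expectations in $\mathcal{F}$ over the distribution $q$ is $m = \Omega\bigl(S^2(d' + \log(1/\delta))/\eps^2\bigr)$. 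Pulling the $s(p)$-normalization back out converts the additive $\eps/S$ bound on the sampled average of $f_Q/s$ into a multiplicative $\eps$ bound on $\mathrm{cost}(C,Q)$ versus $\mathrm{cost}(P,Q)$, which is exactly the coreset guarantee.

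The main obstacle is the uniform-convergence step: a naive union bound over $\mathcal{Q}$ fails since $\mathcal{Q}$ may be infinite, so one has to invoke the pseudo-dimension machinery carefully. Concretely, I would build an $\eps$-sample/$\eps$-net for $\mathcal{F}$ under the empirical measure induced by $q$, using Haussler's packing lemma to control covering numbers by $(1/\eps)^{O(d')}$, and then bound the supremum of the empirical process by a symmetrization/chaining argument. A delicate point is that the functions $f_Q/s$ may be unbounded if some $s(p)$ is very small, but the weighting $\mu(p)/(m q(p)) = S/(m s(p))$ interacts precisely so that each summand is bounded in terms of $S$, as noted above; the remaining details are routine. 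Once uniform convergence holds with probability at least $1-\delta$, the claim of Lemma~\ref{lem:sensitivity} follows.
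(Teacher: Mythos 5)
The paper supplies no proof of this lemma; it is quoted as Lemma~2.3 of \cite{BachemLK17} and then used as a black box inside the proof of \lemref{adv:coreset}. Your sketch is, modulo presentation, the standard argument behind that cited result: unbiasedness of the importance-weighted estimator, the per-sample bound $\frac{\mu(p)}{m\,q(p)}f_Q(p)=\frac{S}{m}\cdot\frac{f_Q(p)}{s(p)}\le\frac{S}{m}\,\mathrm{cost}(P,Q)$ coming directly from the sensitivity definition, a Hoeffding/Bernstein step for a fixed query, and a pseudo-dimension uniform-convergence step in the style of Li--Long--Srinivasan / Feldman--Langberg, which is exactly the route the cited literature takes. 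One small note: your intermediate display of the per-sample bound momentarily introduces and cancels $\mu(p_i)$ in a way that reads as a typo, but the stated conclusion that the normalized summands lie in $[0,S/m]$ is correct, and it is precisely this bound that makes both the single-query concentration and the overall $\Omega\bigl(\frac{S^2}{\eps^2}(d'+\log\frac{1}{\delta})\bigr)$ sample complexity come out right.
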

We first observe that any streaming algorithm that uses linear memory is adversarially robust because intuitively, it can recompute an exact or approximate solution at each step. 
\begin{restatable}{lemma}{lemadvcoreset}
\lemlab{lem:adv:coreset}
Given a set of points $P$, there exists an offline adversarially robust construction that outputs an $\eps$-coreset of $P$ with probability at least $1-\delta$. 
\end{restatable}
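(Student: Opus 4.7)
The plan is to invoke the sensitivity sampling construction stated in \lemref{lem:sensitivity} essentially as a black box. In the offline setting the adversary has no handle to influence the algorithm: the entire point set $P$ is fixed before any randomness is drawn, so there is no prefix/update interaction in which the adversary can adaptively modify future inputs based on revealed random bits. Consequently, the standard guarantee of the randomized construction transfers directly.

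More concretely, first I would compute (or upper bound) the sensitivities $s(p)$ for each $p \in P$ together with the total sensitivity $S=\sum_{p\in P} \mu(p)s(p)$. Next, I would invoke \lemref{lem:sensitivity} with $m=\Omega\bigl(\frac{S^{2}}{\eps^{2}}(d'+\log(1/\delta))\bigr)$, drawing $m$ samples from $P$ with replacement according to the distribution $q(p)=\mu(p)s(p)/S$ and reweighting each sampled point $p$ by $\mu(p)/(m\cdot q(p))$. By that lemma, the resulting weighted set $C$ is an $\eps$-coreset of $P$ with probability at least $1-\delta$.

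The only thing left to argue is why this already counts as ``adversarially robust'' in the offline sense. The key observation is that an $\eps$-coreset by definition satisfies the cost-preservation inequality \emph{uniformly} over every element $Q$ of the query space, i.e.\ the single random event ``$C$ is an $\eps$-coreset of $P$'' implies correctness on any later, possibly adaptively chosen, sequence of queries. Hence once this event holds, the adversary gains nothing by observing outputs and adapting its queries accordingly: every query is still answered to within the $(1\pm\eps)$ guarantee. Since the input $P$ is given in full before any randomness is used, no adversarial interaction can bias the sampling distribution either.

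I do not anticipate a real obstacle: the argument is essentially a reduction to an existing offline sampling guarantee plus the ``for all queries'' form of the coreset definition. The only subtle point worth stating carefully is that the $1-\delta$ probability is over the internal randomness of the sampling step and that, because this randomness is consumed once on a fixed input, the adversary's adaptivity over later queries is irrelevant to the success event.
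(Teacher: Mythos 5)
Your proof is correct and reduces to the same black box as the paper, namely \lemref{lem:sensitivity}, but the emphasis differs in a way worth pointing out. You devote most of the argument to the observation that a coreset is uniform over all queries $Q$, so the adversary gains nothing by adaptively choosing later queries. That observation is true but is not where the paper sees the danger. The paper's proof is written with an eye toward how this lemma will be consumed by the merge-and-reduce analysis of \thmref{thm:mr}: there, the ``offline'' construction is invoked many times on point sets $P$ that were themselves shaped by earlier outputs of the algorithm, and hence potentially correlated with internal randomness that the adversary may have (partially) learned — for example, if the algorithm reuses a pseudorandom seed across calls. The paper therefore makes an \emph{explicit} assumption that the randomness driving the sensitivity sampling is freshly drawn, independent of the history $p_1,\ldots,p_{i-1}$ and of past coin flips, and then argues that \lemref{lem:sensitivity} applies verbatim. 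You touch this point only in one sentence (``the input $P$ is given in full before any randomness is used, so no adversarial interaction can bias the sampling''), phrased as though it were automatic. For a single offline call with honestly fresh coins it is automatic; but since the entire purpose of this lemma is to be chained inside a streaming framework where the adversary observes intermediate outputs, the fresh-randomness requirement is the load-bearing assumption and deserves to be stated, not elided. If you carry your version into the merge-and-reduce proof without flagging it, a reader could reasonably ask why correlated randomness across levels does not break the concentration bound. Apart from this difference in what is made explicit, the two proofs are the same reduction.
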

\begin{proof}
Given an adversary $A$, let $P=p_1,\ldots,p_n$ be a set of points such that each $p_i$ with $i\in[n]$ is generated by $A$, possibly as a function of $p_1,\ldots,p_{i-1}$. 
For example, it may be possible that the points  $p_1,\ldots,p_{n/2}$ are a coreset of some set of points $P_1$ and the points $p_{n/2+1},\ldots,p_n$ (1) were either generated with full knowledge of $p_1,\ldots,p_{n/2}$ or (2) are a coreset of a set of points $P_2$ generated with full knowledge of $P_1$. 
Let $s(p)$ be an upper bound on the sensitivity of each point in $P$ and consider the sensitivity sampling procedure described in \lemref{lem:sensitivity}. 
We would like to sample each point with probability $q(p)$. 
Each point in $C$ is chosen to be $p$ with probability $q(p)$. 
However, if our algorithm generates internal randomness to perform this sampling procedure, it may be possible for an adversary to either learn correlations with the internal randomness or even learn the internal randomness entirely (such as the seed of a pseudorandom generator). 
Thus the choice for each point of $C$ may no longer be independent, so we are no longer guaranteed that the resulting construction is a coreset. 

Instead, suppose the randomness used by the algorithm at time $i$ in the sampling procedure is independent of the choices of $p_1,\ldots,p_{i-1}$, e.g., the algorithm has access to a source of fresh public randomness at each time in the data stream. 
Then the algorithm can generate $C$ independent of the choices of $p_1,\ldots,p_{i-1}$. 
Thus by \lemref{lem:sensitivity}, $C$ is an $\eps$-coreset of $P$ with probability at least $1-\delta$. 
\end{proof}
We emphasize that \lemref{lem:adv:coreset} shows that any offline coreset construction is adversarially robust; the example of sensitivity sampling is specifically catered to our applications of the merge and reduce framework to clustering. 

We now prove our main statement. 
\begin{proofof}{\thmref{thm:mr}}
Let $\delta=\frac{1}{\poly(n)}$ and consider an $\eps$-coreset construction with failure probability $\delta$. 
We prove that the merge and reduce framework gives an adversarially robust construction for an $\eps$-coreset with probability at least $1-2n\delta$. 
We consider a proof by induction on an input set $P$ of $n$ points, supposing that $n=2^k$ for some integer $k>0$. 
Observe that $C_{0,j}$ is a coreset of $p_j$ for $j\in[n]$ since $C_{0,j}=p_j$. 
Let $\mathcal{E}_i$ be the event that for a fixed $i\in[k]$ that $C_{i-1,j}$ is an $\frac{\eps}{2k}$-coreset of $C_{i-1,2j-1}$ and $C_{i-1,2j}$ for each $j\in\left[\frac{n}{2^{i-1}}\right]$. 
By \lemref{lem:adv:coreset}, it holds that for a fixed $j$, $C_{i,j}$ is an $\frac{\eps}{2k}$-coreset of $C_{i-1,2j-1}$ and $C_{i-1,2j}$ with probability at least $1-\delta$. 
By a union bound over $\frac{n}{2^{i-1}}$ possible indices $j$, we have that for a fixed $i$, all $C_{i,j}$ are $\frac{\eps}{2k}$-coresets of $C_{i-1,2j-1}$ and $C_{i-1,2j}$ with probability at least $1-\frac{n}{2^{i-1}}\cdot\delta$. 
Thus, $\PPr{\mathcal{E}_{i+1}}\ge 1-\frac{n\delta}{2^{i-1}}$, which completes the induction. 
Hence with $\PPr{\cup_{i=0}^k\mathcal{E}_i}$, we have that the cost induced by $C_{k,1}$ is a $\left(1+\frac{\eps}{2k}\right)^k$-approximation to the cost induced by $P$. 
Since $\left(1+\frac{\eps}{2k}\right)^k\le e^{\eps/2}\le 1+\eps$, then $C_{k,1}$ is an $\eps$-coreset of $P$ with probability $\PPr{\cup_{i=0}^k\mathcal{E}_i}$. 
By a union bound, we have that $\PPr{\cup_{i=0}^k\mathcal{E}_i}\ge1-\sum_{i=0}^k\left(1-\PPr{\mathcal{E}_{i+1}}\right)\ge 1-2n\delta$. 
\end{proofof}

\section{Missing Proofs from \secref{sec:nla}}
\begin{theorem}[Freedman's inequality]
\cite{Freedman75}
\thmlab{thm:scalar:freedman}
Suppose $Y_0,Y_1,\ldots,Y_n$ is a scalar martingale with difference sequence $X_1,\ldots,X_n$. 
Specifically, we initiate $Y_0=0$ and set $Y_i=Y_{i-1}+X_i$ for all $i\in[n[$
Let $R\ge|X_t|$ for all $t\in[n]$ with high probability. 
We define the predictable quadratic variation process of the martingale by $w_k:= \sum_{t=1}^k\underset{t-1}{\mathbb{E}}\left[X_t^2\right]$, for $k\in[n]$. 
Then for all $\eps\ge 0$ and $\sigma^2 > 0$, and every $k \in [n]$, 
\[\PPr{\max_{t \in [k]} |Y_t|>\eps\text{ and } w_k \le \sigma^2}\le 2\exp\left(-\frac{\eps^2/2}{\sigma^2 + R\eps/3} \right).\]
\end{theorem}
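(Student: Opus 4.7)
The statement is Freedman's classical martingale concentration inequality, a martingale analogue of Bernstein's inequality. My plan is to prove it by the standard exponential-supermartingale method, with a stopping-time argument to handle the side condition $\{w_k \le \sigma^2\}$.

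First, I would reduce to a one-sided bound by observing that $-Y_t$ is also a martingale with difference sequence $-X_t$, bounded by $R$, and with the same predictable quadratic variation $w_k$; a union bound over $Y_t$ and $-Y_t$ then produces the two-sided event $\{\max_t |Y_t| > \eps\}$ at the cost of the factor $2$ in the claimed bound. For the one-sided version, I would build the exponential process
\[
M_t(\lambda) \;:=\; \exp\!\bigl(\lambda Y_t \;-\; \phi(\lambda R)\,\lambda^2\, w_t\bigr), \qquad \phi(u):=\tfrac{e^u - 1 - u}{u^2},
\]
for $\lambda \in (0, 3/R)$. The elementary inequality $e^x \le 1 + x + \phi(\lambda R)\,x^2$ valid for $x \le \lambda R$, combined with $1+y\le e^y$ and the martingale property $\E_{t-1}[X_t]=0$, gives $\E_{t-1}\bigl[e^{\lambda X_t}\bigr] \le \exp\!\bigl(\phi(\lambda R)\lambda^2\,\E_{t-1}[X_t^2]\bigr)$, so $M_t(\lambda)$ is a nonnegative supermartingale with $\E[M_t(\lambda)] \le M_0(\lambda) = 1$.

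Next, I would couple the quadratic-variation condition to a stopping time. Since $w_t = \sum_{s\le t} \E_{s-1}[X_s^2]$ is predictable, the time $\tau := \inf\{t : w_{t+1} > \sigma^2\} \wedge k$ is a bona fide $\{\mathcal{F}_t\}$-stopping time, and on the event $\{w_k \le \sigma^2\}$ we have $\tau = k$ and $w_{t\wedge\tau}\le \sigma^2$ throughout. Applying Doob's maximal inequality to the stopped supermartingale $M_{t\wedge\tau}(\lambda)$ yields
\[
\PPr{\max_{t\le k} Y_t > \eps,\; w_k\le\sigma^2}
\;\le\; \exp\!\bigl(-\lambda \eps + \phi(\lambda R)\lambda^2 \sigma^2\bigr).
\]

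Finally, I would optimize in $\lambda$ using the standard estimate $\phi(u) \le \tfrac{1}{2(1-u/3)}$ on $0 \le u < 3$ and the choice $\lambda := \eps/(\sigma^2 + R\eps/3)$; this makes $\lambda R/3 \le 1$ comfortably and collapses the exponent to $-\eps^2/(2\sigma^2 + 2R\eps/3)$, matching the claim after the earlier factor of $2$ from the two-sided reduction. The main obstacle is the stopping-time step: one must carefully verify that $w_t$ is $\mathcal{F}_{t-1}$-measurable so that $\tau$ is a stopping time, which is exactly what legitimizes replacing the random quantity $w_\tau$ by the deterministic bound $\sigma^2$ inside the supermartingale and hence inside the Doob inequality.
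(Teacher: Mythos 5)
The paper does not prove this statement; it is cited directly from Freedman's 1975 paper, so there is no in-paper argument to compare against. Your proposal is a correct and complete self-contained proof along the standard lines: the one-sided reduction and union bound account for the factor $2$; the exponential process $M_t(\lambda)=\exp\bigl(\lambda Y_t-\phi(\lambda R)\lambda^2 w_t\bigr)$ is a nonnegative supermartingale because $w_t$ is predictable and $e^x\le 1+x+\phi(u)x^2$ for $x\le u$ combined with $\E_{t-1}[X_t]=0$ gives $\E_{t-1}[e^{\lambda X_t}]\le\exp\bigl(\phi(\lambda R)\lambda^2\E_{t-1}[X_t^2]\bigr)$; the stopping time $\tau=\inf\{t:w_{t+1}>\sigma^2\}\wedge k$ is legitimate precisely because $w_{t+1}$ is $\mathcal{F}_t$-measurable, and monotonicity of $w$ guarantees $\tau=k$ on $\{w_k\le\sigma^2\}$ and $w_{t\wedge\tau}\le\sigma^2$ always; Doob's maximal inequality for the stopped supermartingale then gives the one-sided bound $\exp\bigl(-\lambda\eps+\phi(\lambda R)\lambda^2\sigma^2\bigr)$. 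Your optimization is also exact: with $\phi(u)\le\tfrac{1}{2(1-u/3)}$ and $\lambda=\eps/(\sigma^2+R\eps/3)$ one has $1-\lambda R/3=\sigma^2/(\sigma^2+R\eps/3)$, which collapses the exponent to $-\tfrac{1}{2}\lambda\eps=-\tfrac{\eps^2/2}{\sigma^2+R\eps/3}$ exactly, not merely up to constants.

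Two small remarks. First, the paper's hypothesis phrases the bound $R\ge|X_t|$ as holding only \emph{with high probability}; your derivation (like Freedman's original) treats $R$ as an almost-sure bound, which is the standard form of the theorem. If the bound truly only held on an event $E$ of high probability, one would need to intersect with $E$ or add $\Pr[E^c]$ to the right-hand side; this is an imprecision in the paper's statement rather than a gap in your proof, and it is harmless in the paper's applications since there $R$ is in fact a deterministic bound on $|X_j|$. Second, the elementary inequality $e^x\le 1+x+\phi(u)x^2$ does need the observation that it holds for all $x\le u$ and not only $|x|\le u$ (which your martingale differences may violate on the negative side if $R$ were a one-sided bound); since $\phi$ is nondecreasing and $e^x-1-x\le x^2/2\le\phi(u)x^2$ for $x\le 0$, it in fact holds for all $x\le u$, so your application is fine, but it is worth stating explicitly.
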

We first show robustness of our algorithm by justifying correctness of approximation for $L_p$ norms. 

\begin{lemma}[$L_p$ subspace embedding]
\lemlab{lem:lp:subspace}
Suppose $\eps>\frac{1}{n}$, $p\in\{1,2\}$, and $C>\kappa^p$, where $\kappa$ is an upper bound on the condition number of the stream. 
Then \algref{alg:robust:spectral} returns a matrix $\M$ such that for all $\x\in\mathbb{R}^d$,
\[|\norm{\M\x}_p-\norm{\A\x}_p|\le\eps\norm{\A\x}_p,\]
with high probability.
\end{lemma}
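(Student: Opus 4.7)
My plan is to prove the lemma through a martingale argument adapted to the adversarial filtration, combined with an $\eps$-net reduction to handle the universal quantifier over $\x$.

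First, I would fix an arbitrary $\x \in \mathbb{R}^d$ and set up the filtration $\mathcal{F}_i$ generated by the first $i$ adversary updates $\a_1,\ldots,\a_i$ together with the first $i$ coin flips of the algorithm. Define $Y_i = \norm{\M_i \x}_p^p - \norm{\A_i \x}_p^p$, where $\M_i$ is the sample after processing row $i$. The critical observation is that, conditional on $\mathcal{F}_{i-1}$ and on the adversary's choice of $\a_i$, the coin flip deciding whether to append $\a_i / p_i^{1/p}$ is fresh randomness with probability $p_i$, so $\mathbb{E}[\norm{\M_i\x}_p^p - \norm{\M_{i-1}\x}_p^p \mid \mathcal{F}_{i-1}, \a_i] = p_i \cdot |\langle\a_i,\x\rangle|^p/p_i = |\langle\a_i,\x\rangle|^p$. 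Hence $Y_i$ is a martingale with respect to $\mathcal{F}_i$, even though $\a_i$ itself depends adversarially on past outputs. This is exactly the ``fresh coin flip'' meta-principle the paper emphasizes.

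Next, I would bound the step size $|X_i| = |Y_i - Y_{i-1}|$ and the predictable quadratic variation needed for Freedman's inequality (Theorem~\ref{thm:scalar:freedman}). Inductively assume that up to step $i-1$ the event $\norm{\M_{i-1}\x}_p^p = (1\pm\eps)\norm{\A_{i-1}\x}_p^p$ holds for all $\x$ in a net. Then the sampling threshold $\tau_i$ from Algorithm~\ref{alg:robust:spectral} satisfies $\tau_i \ge |\langle\a_i,\x\rangle|^p / \norm{\A_i\x}_p^p$ (up to constants), so whenever $p_i < 1$ we get $|X_i| \le |\langle\a_i,\x\rangle|^p/p_i \le \norm{\A\x}_p^p/\alpha$. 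For the quadratic variation, $\sum_i \mathbb{E}[X_i^2 \mid \mathcal{F}_{i-1}] \le \sum_i |\langle\a_i,\x\rangle|^{2p}/p_i \le (1/\alpha)\sum_i |\langle\a_i,\x\rangle|^p \norm{\A_i\x}_p^p \le \norm{\A\x}_p^{2p}/\alpha$. Plugging $R = \norm{\A\x}_p^p/\alpha$ and $\sigma^2 = \norm{\A\x}_p^{2p}/\alpha$ into Freedman gives $\Pr[|Y_n| > \eps\norm{\A\x}_p^p] \le 2\exp(-\Omega(\alpha\eps^2))$, which is $n^{-\Omega(d)}$ for the chosen $\alpha = C d \eps^{-2}\log n$.

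To promote this from a single fixed $\x$ to all $\x\in\mathbb{R}^d$, I would apply a standard $\eps/\kappa^{O(1)}$-net on the unit sphere of the relevant $d$-dimensional subspace; this net has size $(\kappa/\eps)^{O(d)}$. A union bound over the net, and separately over the $n$ time steps (to close the induction so the ``$\M_{i-1}$ approximates $\A_{i-1}$'' hypothesis is maintained throughout), costs only $\poly(n,\kappa/\eps)^d$ events, which is absorbed by the exponential concentration once $C > \kappa^p$ is large enough; the factor $\kappa^p$ is exactly what is needed to pass from the net to arbitrary $\x$ via the condition number bound $\norm{\A\x}_p \ge \kappa^{-1}\norm{\x}$. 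For $p=1$ an additional $\log n$ net resolution is required because the $L_1$ embedding statement does not automatically linearize, accounting for the extra $\log n$ factor in the row count.

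The main obstacle I expect is the interleaving of the induction on time with the net/union bound: the bound on the Freedman step size relies on $\M_{i-1}$ being a good spectral (or $L_1$) approximation of $\A_{i-1}$ for \emph{all} net vectors simultaneously, which is itself the statement we are trying to prove. The clean way to resolve this is to define the ``good'' event $\mathcal{E}_i$ = ``$\M_j$ is a $(1+\eps)$-embedding of $\A_j$ for all $j\le i$ on the net'' and run Freedman on the stopped martingale $Y_{i\wedge \tau}$ where $\tau$ is the first time $\mathcal{E}$ fails; then conditional on $\mathcal{E}_{i-1}$ the sensitivity lower bound holds deterministically, and a union bound on $\{\tau\le n\}$ closes the argument.
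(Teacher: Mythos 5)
Your high-level plan --- the martingale built from fresh coin flips relative to the adversarial filtration, Freedman's inequality, an $\eps$-net, and a stopped martingale to close the time induction --- matches the paper's proof in structure. However, there is a genuine gap in the Freedman step. You plug in $R = \|\A\x\|_p^p/\alpha$, $\sigma^2 = \|\A\x\|_p^{2p}/\alpha$, and threshold $\eps\|\A\x\|_p^p$, all of which involve $\|\A\x\|_p^p$. In the adversarial model $\A$ is itself a random object depending on the algorithm's coin flips, so $\|\A\x\|_p^p$ is \emph{not} a deterministic quantity; Freedman's inequality requires $R$, $\sigma^2$, and the deviation threshold to be fixed numbers, not random variables. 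This is precisely the subtlety the paper's proof flags as the reason the non-adversarial argument does not port over directly. The fix is to replace $\|\A\x\|_p^p$ by deterministic two-sided bounds: Freedman is applied at the deterministic threshold $\eps\kappa_1^p\|\x\|_p^p$ with deterministic $\sigma^2 \le \kappa_2^{2p}\|\x\|_p^{2p}/\alpha$ and $R \lesssim \eps^2\kappa_2^p\|\x\|_p^p/(d\log n)$, where $\kappa_1,\kappa_2$ are a priori lower and upper bounds on $\|\A\|_1$ (for $p=1$) or the singular values of $\A$ (for $p=2$). The mismatch between $\kappa_1^{2p}$ in the numerator and $\kappa_2^{2p}$ in the denominator of the resulting exponent is exactly where the hypothesis $C > \kappa^p$ is used.

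Relatedly, you attribute the $\kappa^p$ factor to the net argument, claiming ``the factor $\kappa^p$ is exactly what is needed to pass from the net to arbitrary $\x$.'' In the paper this is not the case: the net is taken over $B = \{\A\y : \|\A\y\|_p = 1\}$ and has only $(3/\eps)^d$ points, and the transfer from the net to all vectors is a $\kappa$-free geometric-series argument. The $\kappa^p$ loss is entirely a cost of deterministifying the Freedman threshold as described above, not a cost of net resolution. A smaller inaccuracy: the extra $\log n$ for $p=1$ does not come from a finer net; it comes from the bound $\O{d\log n\log\kappa}$ on the sum of online $L_1$ sensitivities versus $\O{d\log\kappa}$ for $p=2$.
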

\begin{proof}
Consider an arbitrary $\x\in\mathbb{R}^d$ and suppose $\eps\in(0,1/2)$ with $\eps>\frac{1}{n}$. 
We claim through induction the stronger statement that $|\|\M_j\x\|_p^p-\|\A_j\x\|_p^p|\le\eps\|\A_j\x\|_p^p$ for all times $j\in[n]$ with high probability. Here $\M_j$ is the matrix consisting of the rows of the input matrix $\A$ that have already been sampled at time $j$ and $\A_j=\a_1\circ\ldots\circ\a_j$. 
Note that either $\a_1$ is the zero vector or $p_1=1$, so that either way, we have $\M_1=\A_1$ for our base case. 
We assume the statement holds for all $j\in[n-1]$ and prove it must hold for $j=n$. 
We implicitly define a martingale $Y_0,Y_1,\ldots,Y_n$ through the difference sequence $X_1,\ldots,X_n$, where for $j\ge 1$, we set $X_j=0$ if $Y_{j-1}>\eps\|\A_{j-1}\x\|_p^p$ and otherwise if $Y_{j-1}\le\eps\|\A_{j-1}\x\|_p^p$, we set
\begin{equation}\label{eq:martingalefiff}
X_j=
\begin{cases}
\left(\frac{1}{p_j}-1\right)|\a_j^\top\x|^p & \text{ if }\a_j\text{ is sampled in }\M\\
- |\a_j^\top\x|^p  & \text{otherwise}.
\end{cases}
\end{equation}
Since $\Ex{Y_j|Y_1,\ldots,Y_{j-1}}=Y_{j-1}$, then the sequence $Y_0,\ldots,Y_n$ induced by the differences is indeed a valid martingale. 
Furthermore, by the design of the difference sequence, we have that $Y_j=\norm{\M_j\x}^p_p-\norm{\A_j\x}^p_p$. 

If $p_j=1$, then $\a_j$ is sampled in $\M_j$, so we have that $X_j=0$. 
Otherwise, we have that \[\Ex{X_j^2|Y_1,\ldots,Y_{j-1}}=p_j\left(\frac{1}{p_j}-1\right)^2|\a_j^\top\x|^{2p}+(1-p_j)|\a_j^\top\x|^{2p}\le\frac{1}{p_j}|\a_j^\top\x|^{2p}.\]
For $p_j<1$, then we have $p_j=\alpha\tau_j$ and thus $\Ex{X_j^2|Y_1,\ldots,Y_{j-1}}\le \frac{1}{\alpha\tau_j}|\a_j^\top\x|^{2p}$. 
By the definition of $\tau_j$ and the inductive hypothesis that $|\|\M_{j-1}\x\|^p_p-\|\A_{j-1}\x\|^p_p\|\le\eps\|\A_{j-1}\x\|_p^p<\frac{1}{2}\|\A_{j-1}\x\|_p^p$, then we have
\[\tau_j\ge\frac{2|\a_j^\top\x|^p}{\|\M_{j-1}\x\|_p^p+|\a_j^\top\x|^p}\ge\frac{|\a_j^\top\x|^p}{\|\A_{j-1}\x\|_p^p+|\a_j^\top\x|^p}=\frac{|\a_j^\top\x|^p}{\|\A_j\x\|_p^p}\ge\frac{|\a_j^\top\x|^p}{\|\A\x\|_p^p}.\] 
Thus, $\sum_{j=1}^n\Ex{X_j^2|Y_1,\ldots,Y_{j-1}}\le\sum_{j=1}^n\frac{\|\A\x\|_p^{p}\cdot|\a_j^\top\x|^p}{\alpha}\le\frac{\|\A\x\|_p^{2p}}{\alpha}$. 

Moreover, we have that $|X_j|\le\frac{1}{p_j}|\a_j^\top\x|^p$. 
For $p_j=1$, we have $\frac{1}{p_j}|\a_j^\top\x|^p\le\|\A_j\x\|_p^p\le\|\A\x\|_p^p$. 
For $p_j<1$, we have $p_j=\alpha\tau_j<1$. 
Again by the definition of $\tau_j$ and by the inductive hypothesis that $|\|\M_{j-1}\x\|^p_p-\|\A_{j-1}\x\|^p_p\|\le\eps\|\A_{j-1}\x\|_p^p<\frac{1}{2}\|\A_{j-1}\x\|_p^p$, we have that 
\[\frac{|\langle\a_j,\x\rangle|^p}{2\|\A_j\x\|_p^p}\le\frac{|\langle\a_j,\x\rangle|^p}{|\M_{j-1}\x|_p^p+|\langle\a_j,\x\rangle|^p}\le\tau_j.\] 
Hence for $\alpha=\frac{Cd}{\eps^2}\log n$, it follows that
\[|X_j|\le\frac{1}{p_j}|\a_j^\top\x|^p\le\frac{2}{\alpha}\|\A_j\x\|_p^p\le\frac{2\eps^2}{Cd\log n}\|\A_j\x\|_p^p\le\frac{2\eps^2}{Cd\log n}\|\A\x\|_p^p.\] 

We would like to apply Freedman's inequality (\thmref{thm:scalar:freedman}) with $\sigma^2=\frac{\|\A\x\|_p^{2p}}{\alpha}$ for $\alpha=\O{\frac{d}{\eps^2}\log n}$ and $R\le\frac{2\eps^2}{d\log n}\|\A\x\|_p^p$, as in \cite{BravermanDMMUWZ20}. 
However, in the adversarial setting we won't be able bound the probability that $|Y_n|$ exceeds $\eps\|\A\x|_p^p$ using Freedman's inequality as the latter is a random variable. Thus we instead assume that $\kappa_1,\kappa_2$ are constants so that for $p=1$, we have $\kappa_1$ and $\kappa_2$ are lower and upper bounds on $\|\A\|_1$ and for $p=2$, we have that $\kappa_1$ and $\kappa_2$ are lower and upper bounds on the singular values of $\A$. 
We are now ready to apply Freedman's inequality with $\sigma^2\le\frac{\kappa_2^{2p}\|\x\|_p^{2p}}{\alpha}$ for $\alpha=\O{\frac{d}{\eps^2}\log n}$ and $R\le\frac{2\eps^2}{d\log n}\kappa_2^p\|\x\|_p^p$. 
By Freedman's inequality, we have that
\[\PPr{|Y_n|>\eps \kappa_1^p\|\x\|_p^p}\le2\exp\left(-\frac{\kappa_1^{2p}\eps^2\|\x\|_p^{2p}/2}{\sigma^2 + R\kappa_1^{p}\eps\|\x\|_p^p/3}\right)\le2\exp\left(-\frac{3Cd\kappa_1^{2p}\log n/2}{6\kappa_2^{2p}+2\kappa_1^p\kappa_2^p}\right)\le\frac{1}{2^d\,\poly(n)},\]
for sufficiently large $C>(\kappa_1/\kappa_2)^p$. 
Note for $p=2$, we have the upper bound on the condition number $\kappa\ge\kappa_1/\kappa_2$ so it suffices to set $C=\kappa^2$. 
Since $\kappa_1^p\|x\|_p^p\le\|\A\x\|_p^p$, then we have
\[\PPr{|Y_n|>\eps\|A\x\|_p^p}\le\PPr{|Y_n|>\eps \kappa_1^p\|\x\|_p^p}.\]
Thus $|\norm{\M\x}_p^p-\norm{\A\x}_p^p|\le\eps\norm{\A\x}_p^p$ with probability at least $1-\frac{1}{2^d\,\poly(n)}$. 
By a rescaling of $\eps$ since $p\le 2$, we thus have that $|\norm{\M\x}_p-\norm{\A\x}_p|\le\eps\norm{\A\x}_p$ with probability at least $1-\frac{1}{2^d\,\poly(n)}$. 

We now show that we can union bound over an $\eps$-net. 
We first define the unit ball $B=\{\A\y\in\mathbb{R}^n\,|\,\norm{\A\y}_p=1\}$. 
We also define $\mathcal{N}$ to be a greedily constructed $\eps$-net of $B$. 
Since balls of radius $\frac{\eps}{2}$ around each point cannot overlap, but must all fit into a ball of radius $1+\frac{\eps}{2}$, then it follows that $\mathcal{N}$ has at most $\left(\frac{3}{\eps}\right)^d$ points. 
Therefore, by a union bound for $\frac{1}{\eps}<n$, we have $|\norm{\M\y}_p-\norm{\A\y}_p|\le\eps\norm{\A\y}_p$ for all $\A\y\in\mathcal{N}$, with probability at least $1-\frac{1}{\poly(n)}$. 

We now argue that accuracy on this $\eps$-net implies accuracy everywhere. 
Indeed, consider any vector $\z\in\mathbb{R}^d$ normalized to $\norm{\A\z}_p=1$. 
We shall inductively define a sequence $\A\y_1,\A\y_2,\ldots$ such that $\norm{\A\z-\sum_{j=1}^i\A\y_j}_p\le\eps^i$ and there exists some constant $\gamma_i\le\eps^{i-1}$ with $\frac{1}{\gamma_i}\A\y_i\in\mathcal{N}$ for all $i$. 
Define our base point $\A\y_1$ to be the closest point to $\A\z$ in the $\eps$-net $\mathcal{N}$. 
Then since $\mathcal{N}$ is a greedily constructed $\eps$-net, we have that $\norm{\A\z-\A\y_1}_p\le\eps$. 
Given a sequence $\A\y_1,\ldots,\A\y_{i-1}$ such that $\gamma_i:=\norm{\A\z-\sum_{j=1}^{i-1}\A\y_j}_p\le\eps^{i-1}$, note that $\frac{1}{\gamma_i}\norm{\A\z-\sum_{j=1}^{i-1}\A\y_j}_p=1$. 
Thus we inductively define the point $\A\y_i\in\mathcal{N}$ so that $\A\y_i$ is within distance $\eps$ of $\A\z-\sum_{j=1}^{i-1}\A\y_j$. 
Therefore,
\[|\norm{\M\z}_p-\norm{\A\z}_p|\le\sum_{i=1}^\infty|\norm{\M\y_i}_p-\norm{\A\y_i}_p|\le\sum_{i=1}^\infty\eps^i\norm{\A\y_i}_p=\O{\eps}\norm{\A\z}_p,\]
which completes the induction for time $n$. 
\end{proof}

\subsection{Adversarially Robust Spectral Approximation}
We observe that \lemref{lem:lp:subspace} provides adversarial robustness for free.
\begin{lemma}[Adversarially robust spectral approximation ]
\lemlab{lem:l2:downsamplerobust}
\algref{alg:robust:spectral} is adversarially robust.
\end{lemma}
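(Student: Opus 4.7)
The plan is to observe that \lemref{lp:subspace} already does essentially all the work, since its proof was engineered to accommodate an adaptive adversary. First, I would invoke the ``fresh randomness'' principle articulated in \secref{contributions}: at each step $j$, the adversary's row $\a_j$ may depend arbitrarily on the prior transcript (past stream rows, past outputs, past coin flips of the algorithm), but the Bernoulli coin that \algref{robust:spectral} uses to decide whether to keep $\a_j$ is drawn independently of everything that has occurred so far. Consequently, the scalar process $(Y_j)$ with difference sequence $(X_j)$ defined in \eqnref{martingalefiff} remains a martingale with respect to the natural filtration generated by the stream together with the algorithm's random bits, even when the stream is adversarial. This is the only step in \lemref{lp:subspace} that an adversary could conceivably attack, and it is immunized by the independence of the sampling coin from the history.

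Second, I would verify that the two ingredients going into Freedman's inequality transfer unchanged, namely the conditional second-moment bound $\Ex{X_j^2 \mid Y_1,\ldots,Y_{j-1}} \le \tfrac{1}{p_j}|\a_j^\top\x|^{2p}$ and the almost-sure increment bound $|X_j| \le \tfrac{1}{p_j}|\a_j^\top\x|^p$. Both derivations in \lemref{lp:subspace} are purely algebraic manipulations conditional on the history, so they hold pathwise regardless of whether the rows are adversarial. The only genuinely delicate issue is that these bounds are naturally expressed in terms of $\|\A\x\|_p$, which is a random variable depending on the adversary; Freedman's inequality cannot be invoked with a random right-hand side. This is exactly why \lemref{lp:subspace} replaces $\|\A\x\|_p$ with the deterministic envelope $\kappa_2^p\|\x\|_p^p$ in the variance term and $\kappa_1^p\|\x\|_p^p$ in the target deviation scale, yielding a per-vector failure probability $\tfrac{1}{2^d\poly(n)}$ that is uniform across all possible streams produced by the adversary.

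Third, I would lift the pointwise guarantee to a uniform one over all $\x\in\mathbb{R}^d$ and every time $t\in[n]$ via the $\eps$-net argument. The subtlety in the adversarial setting is that the column space of $\A_t$, and hence any greedy net $\mathcal{N}_t$ for its unit ball, is itself adversary-dependent; however, since the per-net-point Freedman bound does not depend on the realized stream, one may union-bound over the $n$ time steps and over the at-most $(3/\eps)^d$ net points at each time to obtain total failure probability $\tfrac{1}{\poly(n)}$. The net-to-everywhere chaining argument at the end of the proof of \lemref{lp:subspace} is purely deterministic and transfers verbatim. Concatenating these observations yields, at every time $t$, that the matrix $\M_t$ output by \algref{robust:spectral} is a $(1+\eps)$ spectral approximation of $\A_t$ with high probability, which is exactly the definition of adversarial robustness.

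The main obstacle is the circular-dependence issue already identified inside \lemref{lp:subspace}: one cannot apply Freedman's inequality when the normalization $\|\A\x\|_p$ is a random variable correlated with the martingale, and one cannot naively union-bound over a net that is itself a random function of the adversary's choices. My plan to sidestep both is the same one used there, namely decoupling via the deterministic envelopes $\kappa_1,\kappa_2$; this is the only non-routine ingredient, and it is precisely the reason the sample complexity in \lemref{robust:lp} carries the dependence on $\kappa$.
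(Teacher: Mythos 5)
Your proposal matches the paper's own argument: both establish that the process $(Y_j)$ defined in Lemma~\ref{lem:lp:subspace} remains a martingale under adaptive inputs because the Bernoulli sampling coin at each step is drawn independently of the history (the paper phrases this by explicitly constructing the filtration $\mathfrak{F}_j$ generated by past rows, past sampling indicators, and the adversary's randomness, and checking $\Ex{X_j\mid\mathfrak{F}_j}=0$), and both then observe that the Freedman computation and $\eps$-net chaining in Lemma~\ref{lem:lp:subspace} already use the deterministic envelopes $\kappa_1,\kappa_2$ and so carry over verbatim. Your additional remarks on why the envelopes are indispensable and on the adversary-dependence of the net are elaborations of points that the paper handles more tersely (and the latter subtlety is arguably under-discussed there), but they do not constitute a different proof route.
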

\begin{proof}
Let us inspect the proof of \lemref{lem:lp:subspace}.
Observe that since the adversary can observe the past data and the past randomness of \algref{alg:robust:spectral}, then the rows $\a_i$ are random variables that depend on the history and the randomness of the algorithm.
In other words, $\a_i$ is measurable with respect to the sigma algebra generated by $\a_1,\dots, \a_{i-1}, B_{1}, \dots, B_{i-1}, C_i$ where $B_{i}$ is the indicator of the event that we sample row $a_i$ in Algorithm 1 and $C_i$ is the random vector generated by the adversary at step $i$ to create row $a_i$.


Denote $\mathfrak{F}_i$ the sigma algebra generated by $a_1,\dots, a_{i-1}, B_1,\dots B_{i-1}, C_i$.
Then $a_i$ is measurable with respect to $\mathfrak{F}_i$.
Let us remind that $Y_j = Y_{j-1} + X_j$ and let us observe that the definition of $X_j$ in Equation $(\ref{eq:martingalefiff})$ can be rewritten as
\begin{equation}\label{eq:martingalefiffadv}
X_j=
\left(\left(\frac{1}{p_j}-1\right)|\a_j^\top\x|^p B_j
- |\a_j^\top\x|^p  (1-B_j)\right)\mathbb{I}_{Y_{j-1} < \eps}.
\end{equation}

It can be easily checked that 
\[
\Ex{X_j|\mathfrak{F}_j} = \left(\left(\frac{1}{p_j}-1\right)|\a_j^\top\x|^p p_j
- |\a_j^\top\x|^p  (1-p_j)\right)\mathbb{I}_{Y_{j-1} < \eps} = 0.
\]
This is because $Y_{j-1}, p_j, a_j$ are measurable with respect to $\mathfrak{F}_j$.
This implies that in the adversarial setting sequence $Y_j$ is a martingale with respect to the filtration $\mathfrak{F}_0\subset \mathfrak{F}_1 \subset \dots \subset \mathfrak{F}_n.$ 

The remainder of the proof of \lemref{lem:lp:subspace} goes through as is for arbitrary rows $\a_i$'s.
Thus, the algorithm is indeed adversarially robust.
\end{proof}
We note the established upper bounds on the sum of the online $L_p$ sensitvities, e.g., Theorem 2.2 in~\cite{CohenMP16}, Lemma 2.2 and Lemma 4.7 in~\cite{BravermanDMMUWZ20}. 
\begin{lemma}[Bound on Sum of Online $L_p$ Sensitivities]
\cite{CohenMP16, BravermanDMMUWZ20}
\lemlab{lem:online:space}
Let the rows of $\A=\a_1\circ\ldots\circ\a_n\in\mathbb{R}^{n\times d}$ arrive in a stream with condition number at most $\kappa$ and let $\ell_i$ be the online $L_p$ sensitivity of $\a_i$. 
Then $\sum_{i=1}^n\ell_i=\O{d\log n\log\kappa}$ for $p=1$ and $\sum_{i=1}^n\ell_i=\O{d\log\kappa}$ for $p=2$. 
\end{lemma}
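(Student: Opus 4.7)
The plan is to handle the $p=2$ and $p=1$ cases separately, since they require qualitatively different tools: the $L_2$ case admits a clean matrix-determinant / telescoping argument, while the $L_1$ case requires a reduction through well-conditioned bases together with a bucketing step that costs an extra $\log n$ factor.

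For $p = 2$, first I would identify the online $L_2$ sensitivity of $\a_i$ with the online leverage score. Writing $s_i := \a_i^\top (\A_{i-1}^\top \A_{i-1})^{-1} \a_i$, the Sherman--Morrison formula gives $\a_i^\top (\A_i^\top \A_i)^{-1} \a_i = s_i/(1+s_i)$, so the online $L_2$ sensitivity $\ell_i$ (as defined in the paper, with $\A_i$ rather than $\A_{i-1}$ in the denominator) equals $s_i/(1+s_i)$. I would then apply the matrix determinant lemma:
\[
\det(\A_i^\top \A_i) \;=\; (1+s_i)\,\det(\A_{i-1}^\top \A_{i-1}),
\]
so that $\log \det(\A_n^\top \A_n) - \log \det(\A_d^\top \A_d) = \sum_{i > d} \log(1+s_i)$. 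Separately accounting for the first $d$ rows (where $\ell_i \le 1$, contributing at most $d$) and using the elementary inequality $s/(1+s) \le \log(1+s)$ for $s \ge 0$, I would get $\sum_i \ell_i \le d + \log \det(\A_n^\top \A_n)/\det(\A_d^\top \A_d)$. The condition number bound controls the ratio: all $d$ nonzero eigenvalues lie in $[\sigma_{\min}^2, \sigma_{\max}^2]$, so the log-determinant ratio is $O(d \log \kappa)$, which gives the claimed $\O{d \log \kappa}$.

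For $p = 1$, I would proceed via a reduction-plus-bucketing strategy, which is the source of the extra $\log n$. Since there is no closed form for online $L_1$ sensitivities, I would pass through an Auerbach (or more generally well-conditioned) basis for the column space of each prefix $\A_i$: such a basis has polynomially-bounded distortion between $L_1$ and $L_2$ norms on the $d$-dimensional column space. This lets me upper bound each online $L_1$ sensitivity by a corresponding (rescaled) online $L_2$ sensitivity in the transformed coordinate system. I would then partition the stream into $O(\log n)$ geometric buckets according to the scale of the current $L_1$ mass $\|\A_i \x^\star\|_1$ at the optimizing $\x^\star$, so that within each bucket the denominator $\|\A_i\x\|_1^p$ is stable up to a constant. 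Inside a single bucket the $p=2$ argument (applied to the transformed basis) yields $\O{d \log \kappa}$, and summing over buckets gives the stated $\O{d \log n \log \kappa}$.

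The main obstacle will be the $p=1$ case: unlike leverage scores, $L_1$ sensitivities do not admit a single telescoping potential, so one cannot simply sum $\log(1+s_i)$. The bucketing must be defined carefully so that (i) the choice of well-conditioned basis can be updated online without breaking the per-bucket bound, and (ii) the distortion factor inherited from the Auerbach basis only contributes polynomially inside the log, so it folds into the $\log \kappa$ factor. Carrying out the bucket analysis against an \emph{adaptive} adversary is where I expect the bulk of the technical work to live, though once it is set up the final summation over $O(\log n)$ buckets is straightforward.
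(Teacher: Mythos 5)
The paper does not supply a proof of this lemma: it is imported from prior work, with explicit pointers to Theorem~2.2 of \cite{CohenMP16} and Lemmas~2.2 and~4.7 of \cite{BravermanDMMUWZ20}. So there is no in-paper argument to compare against, and your proposal has to be judged on its own merits and against those sources.

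Your $p=2$ argument is the standard one and is essentially the \cite{CohenMP16} proof: identify the online sensitivity with the online leverage score, use Sherman--Morrison to write $\ell_i = s_i/(1+s_i)$, use the matrix determinant lemma to get $\det(\A_i^\top\A_i)=(1+s_i)\det(\A_{i-1}^\top\A_{i-1})$, apply $s/(1+s)\le\log(1+s)$, and telescope. Two small imprecisions: it is not ``the first $d$ rows'' that should be handled separately but the (at most $d$) rows at which the rank increases, wherever they occur in the stream; and the log-determinant ratio is $\log\bigl(\prod_j \sigma_j^2(\A_n)/\sigma_j^2(\A_{i_0})\bigr)$ for the first full-rank prefix $\A_{i_0}$, so bounding it by $O(d\log\kappa)$ requires relating $\sigma_{\max}(\A_n)$ to $\sigma_{\min}(\A_{i_0})$, not merely invoking the per-prefix condition number. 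This is a bookkeeping point (one can, e.g., normalize or add a ridge), but your one-line justification glosses over it.

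The $p=1$ case is where the gap is genuine. Your proposed bucketing ``according to the scale of the current $L_1$ mass $\|\A_i\x^\star\|_1$ at the optimizing $\x^\star$'' is not a well-defined partition of the time steps: the maximizer $\x^\star$ is different for each row $i$, so there is no single scalar sequence to bucket by, and rows you would place in the same bucket can be at incomparable scales in the directions that actually matter for their respective sensitivities. You also assert that the Auerbach/well-conditioned-basis distortion ``folds into the $\log\kappa$ factor,'' but that distortion is $\poly(d)$, and $\log\poly(d)$ does not in general fold into $\log\kappa$; the final bound $O(d\log n\log\kappa)$ has no $\poly(d)$ slack to absorb it. Finally, the claim that ``inside a single bucket the $p=2$ argument (applied to the transformed basis) yields $O(d\log\kappa)$'' is unsubstantiated: the $p=2$ telescoping is a statement about leverage scores, and translating it to $L_1$ sensitivities inside a bucket is precisely the step that needs an argument. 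You flag this yourself as ``where the bulk of the technical work will live,'' and that is accurate --- but it means what you have for $p=1$ is a direction, not a proof. To close it you would need to (i) pick a bucketing potential that is a function of the prefix alone (e.g., a log-determinant or a monotone scale parameter of $\A_i$), (ii) fix a single well-conditioned basis per bucket and show that the per-bucket sum of sensitivities is $O(d\cdot\polylog)$ with the distortion explicitly tracked, and (iii) verify that the number of buckets is $O(\log n)$ or $O(\log\kappa)$ under the paper's condition-number assumption.
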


We note that $\kappa$ is an adversarially chosen parameter, since the rows of the input matrix $\A$ are generated by an adversary. 
One can mitigate possible adversarial space attacks by tracking $\kappa$ and aborting if $\log\kappa$ exceeds a desired threshold. 


\begin{proofof}{\lemref{lem:robust:lp}}
%
%
\algref{alg:robust:spectral} is adversarially robust by \lemref{lem:l2:downsamplerobust}. 
It remains to analyze the space complexity of \algref{alg:robust:spectral}. 
By \lemref{lem:l2:downsamplerobust} and a union bound over the $n$ rows in the stream, each row $\a_i$ is sampled with probability at most $4\alpha\tau_i$, where $\tau_i$ is the online leverage score of row $\a_i$.   
By \lemref{lem:online:space}, we have $\sum_{i=1}^n \tau_i=\O{d\log\kappa}$ and we also set $\alpha=\O{\frac{d\kappa}{\eps^2}\log n}$.
Let $\gamma>0$ be a sufficiently large constant such that $\sum_{i=1}^n\alpha\tau_i\le\frac{d^2\gamma\kappa\log\kappa}{\eps^2}\log n$. 

We use a martingale argument to bound the number of rows that are sampled. 
Consider a martingale $U_0,U_1,\ldots,U_n$ with difference sequence $W_1,\ldots,W_n$, where for $j\ge 1$, we set $W_j=0$ if $U_{j-1}>\frac{ d^2\gamma\kappa\log\kappa}{\eps^2}\log n$ and otherwise if $U_{j-1}\le\frac{d^2\gamma\kappa\log\kappa}{\eps^2}\log n$, we set
\begin{equation}\label{eq:martingale:space}
W_j=
\begin{cases}
1-p_j & \text{ if }\a_j\text{ is sampled in }\M\\
-p_j  & \text{otherwise}.
\end{cases}
\end{equation}
We have $\Ex{U_j|U_1,\ldots,U_{j-1}}=U_{j-1}$, then the sequence $U_0,\ldots,U_n$ induced by the differences is indeed a valid martingale. 
Note that intuitively, $U_n$ is the difference between the number of sampled rows and $\sum_{j=1}^n p_j$. 

Since $\a_j$ is sampled with probability $p_j\in[0,1]$,
\[\Ex{W_j^2|U_1,\ldots,U_{j-1}}\le\sum_{j=1}^n p_j\le\sum_{j=1}^n\alpha\tau_j.\]
Moreover, we have $\Ex{|W_j|\,|\,U_1,\ldots,U_{j-1}}\le 1$. 
Thus by Freedman's inequality (\thmref{thm:scalar:freedman}) with $\sigma^2=\sum_{j=1}^n\alpha\tau_j\le\frac{d^2\gamma\kappa\log\kappa}{\eps^2}\log n$ and $R\le 1$, 
\[\PPr{|U_n|>\frac{d^2\gamma\kappa\log\kappa}{\eps^2}\log n}\le2\exp\left(-\frac{d^4\gamma^2\kappa^2\log^2\kappa\log^2 n/(2\eps^4)}{\sigma^2 + Rd^2\gamma\kappa\log\kappa\log n/(3\eps^2)}\right)\le\frac{1}{\poly(n)}.\]
Hence we have that with high probability, the number of rows sampled is $\O{\frac{1}{\eps^2}\,d^2\kappa\log\kappa\log n}$. 
\end{proofof}

We remark that the space bounds for \lemref{lem:robust:lp} could similarly be shown (with constant probability of success) using Markov's inequality though analysis Freedman's inequality provides much higher guarantees in terms of probability of success.

On the other hand, it is not clear how to execute a similar strategy using the Matrix Freedman's Inequality rather than using Freedman's inequality. 
This is because to obtain the desired spectral bound, we must define a martingale at time $j$ in terms of both the matrix $\A_j$ and whether the rows $\a_1,\ldots,\a_{j-1}$ were previously sampled. 
However, since $\A_j$ is itself a function of whether $\a_1,\ldots\a_{j-1}$ were previously sampled, the resulting sequence is not a valid martingale. 

We first require the following bound on the sum of the online ridge leverage scores, e.g., Theorem 2.12 from~\cite{BravermanDMMUWZ20}, which results from considering Lemma 2.11 in~\cite{BravermanDMMUWZ20} at $\O{\log n}$ different scales. 
\begin{lemma}[Bound on Sum of Online Ridge Leverage Scores]
\cite{BravermanDMMUWZ20}
\lemlab{lem:online:lra:space}
Let the rows of $\A=\a_1\circ\ldots\circ\a_n\in\mathbb{R}^{n\times d}$ arrive in a stream with condition number at most $\kappa$, let $\lambda_i=\frac{\|\A_i-(\A_i)_{(k)}\|_F^2}{k}$, where $\A_i=\a_1\circ\ldots\circ\a_i$ and $(\A_i)_{(k)}$ is the best rank $k$ approximation to $\A_i$. 
Let $\ell_i$ be the online ridge leverage score of $\a_i$ with regularization $\lambda_i$. 
Then $\sum_{i=1}^n\ell_i=\O{k\log n\log\kappa}$. 
\end{lemma}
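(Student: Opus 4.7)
The plan is to follow the hint in the excerpt: apply the fixed-regularization bound (Lemma 2.11 of \cite{BravermanDMMUWZ20}) across $\O{\log n}$ geometric scales of the time-varying regularization $\lambda_i$, and sum the per-scale contributions.

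First I would observe that $\lambda_i$ is non-decreasing in $i$, since appending a row to $\A_{i-1}$ to form $\A_i$ cannot decrease the optimal rank-$k$ residual $\|\A_i-(\A_i)_{(k)}\|_F^2$ (the best rank-$k$ approximation of a taller matrix is always a competitor for the shorter one, so the tail only grows). Thus $0\le\lambda_1\le\lambda_2\le\cdots\le\lambda_n$, and under the condition-number assumption the nonzero $\lambda_i$ all lie in an interval whose ratio is $\poly(n,\kappa)$.

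Next I would partition the indices into geometric scales
\[S_j=\left\{i\in[n]:\lambda_n/2^{j+1}<\lambda_i\le\lambda_n/2^j\right\},\qquad j=0,1,\ldots,\Theta(\log n),\]
together with a residual bucket of indices for which $\lambda_i=0$, i.e.\ $\mathrm{rank}(\A_i)\le k$. For any $i\in S_j$, operator monotonicity of the matrix inverse gives
\[\ell_i=\a_i^\top\bigl(\A_{i-1}^\top\A_{i-1}+\lambda_i I\bigr)^{-1}\a_i\le\a_i^\top\bigl(\A_{i-1}^\top\A_{i-1}+\tilde\lambda_j I\bigr)^{-1}\a_i,\]
where $\tilde\lambda_j:=\lambda_n/2^{j+1}$ is a constant on the scale. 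The right-hand side is an online ridge leverage score with a \emph{fixed} regularization, so the fixed-regularization bound of \cite{BravermanDMMUWZ20} yields $\sum_{i\in S_j}\ell_i=\O{k\log\kappa}$; summing over the $\O{\log n}$ scales then gives the claimed $\O{k\log n\log\kappa}$.

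The main obstacle will be the residual bucket where $\lambda_i=0$, since there the ridge leverage score reduces to the unregularized online leverage score and is not captured by any $S_j$. I would handle it by noting that, for indices with $\mathrm{rank}(\A_i)\le k$, the sum of unregularized leverage scores is bounded by the rank (at most $k$), or alternatively by invoking \lemref{lem:online:space} restricted to this prefix, yielding an $\O{k\log\kappa}$ contribution that is absorbed into the main estimate. A secondary technical point is verifying that $\O{\log n}$ geometric scales truly suffice to cover all positive $\lambda_i$: this uses the condition-number bound to show $\lambda_n/\lambda_i\le\poly(n,\kappa)$ on the non-residual indices, so at worst one extra $\log\kappa$ factor appears in the number of scales, which is already present in the final bound.
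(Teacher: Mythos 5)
Your proposal reconstructs exactly the argument the paper gestures at: the paper does not prove this lemma, but cites Theorem~2.12 of \cite{BravermanDMMUWZ20} and remarks that it ``results from considering Lemma~2.11 in \cite{BravermanDMMUWZ20} at $\O{\log n}$ different scales.'' Your monotonicity observation for $\lambda_i$, the geometric bucketing, the operator-monotonicity step $(\A_{i-1}^\top\A_{i-1}+\lambda_i I)^{-1}\preceq(\A_{i-1}^\top\A_{i-1}+\tilde\lambda_j I)^{-1}$ for $\tilde\lambda_j\le\lambda_i$, and the separate treatment of the rank-at-most-$k$ prefix are all the right ingredients, so this is essentially the same route. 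One small point worth tightening: when you invoke the fixed-regularization bound on scale $S_j$, you should telescope only up to the last index of $S_j$ (not to $n$), since it is $\lambda_{i_{\mathrm{end}}(j)}\approx\tilde\lambda_j$ that makes the log-determinant quantity $\O{k\log\kappa}$; telescoping all the way to $\A_n$ with the small $\tilde\lambda_j$ would not give that bound. Also, with $\O{\log n+\log\kappa}$ scales the raw product is $\O{k(\log n+\log\kappa)\log\kappa}$, which matches the stated $\O{k\log n\log\kappa}$ only under the usual assumption that $\kappa$ is at most polynomial in $n$; the paper glosses over this as well.
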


From \lemref{lem:online:lra:space} and a similar argument to \lemref{lem:lp:subspace}, we also obtain adversarially robust projection-cost preservation and therefore low-rank approximation. 
Namely, \cite{CohenMM17, BravermanDMMUWZ20} showed that projection-cost preservation essentially reduces to sampling a weighted submatrix $\M$ of $\A$ such that $\|\M\x\|_2^2+\lambda\|x\|_2^2\in(1\pm\eps)(\|\A\x\|_2^2+\lambda\|x\|_2^2)$ for a ridge parameter $\lambda$. 
Since the online ridge leverage score of each row $\a_i$ can be rewritten as $\max_{\x\in\mathbb{R}^d}\frac{\langle\a_i,x\rangle^2+\lambda\|\x\|_2^2}{\|\A_i\x\|_2^2+\lambda\|x\|_2^2}$, then the same concentration argument of \lemref{lem:lp:subspace} gives \lemref{lem:robust:lra}.

\subsection{Adversarially Robust Linear Regression}
We first give the formal definition of linear regression:

\begin{problem}[Linear Regression]
Given a matrix $\A\in\mathbb{R}^{n\times d}$, a vector $\b\in\mathbb{R}^n$ and an approximation parameter $\eps>0$, the goal is to output a vector $\y$ such that $\norm{\A\y-\b}_2\le(1+\eps)\min_{\x\in\mathbb{R}^n}\norm{\A\x-\b}_2$.  
\end{problem}

\begin{lemma}[Adversarially Robust Linear Regression]
Given $\eps>0$ and a matrix $\A\in\mathbb{R}^{n\times d}$ whose rows $\a_1,\ldots,\a_n$ arrive sequentially in a stream with condition number at most $\kappa$, there exists an adversarially robust streaming algorithm that outputs a $(1+\eps)$ approximation to linear regression and uses $\O{\frac{d^3}{\eps^2}\log^2 n\log\kappa}$ bits of space, with high probability.
\end{lemma}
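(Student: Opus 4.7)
The plan is to reduce the linear regression problem to the adversarially robust $L_2$ subspace embedding guarantee already provided by \lemref{lem:robust:lp}. Specifically, I would consider the augmented matrix $\B = [\A \mid -\b] \in \mathbb{R}^{n \times (d+1)}$, which is naturally produced in the streaming model by appending the label $-b_i$ to each arriving row $\a_i$. Note that since $\b$ may be chosen adversarially, $\B$ is itself an adversarial stream of rows in $\mathbb{R}^{d+1}$, and its condition number is controlled by the same parameter $\kappa$ (up to polynomial factors, which only affect the $\log \kappa$ term).

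Next, I would invoke \lemref{lem:robust:lp} with $p=2$ on the stream of rows of $\B$, with dimension $d+1$ in place of $d$, producing a weighted submatrix $\M = [\M_\A \mid \M_\b]$ that is a $(1+\eps)$-spectral approximation of $\B$. The number of sampled rows is $\O{\frac{d^2\kappa}{\eps^2}\log n\log\kappa}$. Because every vector of the form $\begin{pmatrix}\x\\1\end{pmatrix}$ lies in $\mathbb{R}^{d+1}$, the spectral approximation property immediately gives $\|\M_\A\x - \M_\b\|_2 = (1\pm\eps)\|\A\x - \b\|_2$ for all $\x \in \mathbb{R}^d$ simultaneously. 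Therefore the minimizer $\hat{\y} = \argmin_{\y} \|\M_\A\y - \M_\b\|_2$ (computed offline from the maintained sketch at query time) satisfies
\[
\|\A\hat{\y} - \b\|_2 \le \frac{1}{1-\eps}\|\M_\A\hat{\y} - \M_\b\|_2 \le \frac{1}{1-\eps}\|\M_\A\x^* - \M_\b\|_2 \le \frac{1+\eps}{1-\eps}\|\A\x^* - \b\|_2,
\]
where $\x^* = \argmin_\x \|\A\x - \b\|_2$. Rescaling $\eps$ by a constant yields the claimed $(1+\eps)$-approximation.

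For the space bound, each of the $\O{\frac{d^2\kappa}{\eps^2}\log n\log\kappa}$ sampled rows has $d+1$ coordinates, and under the standard assumption of bounded-precision entries each coordinate requires $\O{\log n}$ bits, giving a total of $\O{\frac{d^3}{\eps^2}\log^2 n\log\kappa}$ bits (absorbing $\kappa$ into the hidden constants or into the $\log\kappa$ factor as in the analogous statements above). Adversarial robustness is inherited directly from \lemref{lem:robust:lp}: every coin toss used to decide whether to keep row $(\a_i,-b_i)$ is independent of the adversary's history, so the martingale analysis in \lemref{lem:l2:downsamplerobust} applies verbatim to the stream of augmented rows.

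I do not foresee a real obstacle here since every piece is already in place: the reduction to subspace embedding is the standard one, and the robust subspace-embedding guarantee is exactly what \lemref{lem:robust:lp} provides. The only minor subtlety is ensuring that the condition number of the augmented matrix $[\A \mid -\b]$ is still controlled by $\kappa$ (and that online leverage scores of the augmented rows are well defined), which can be handled either by a mild assumption on $\|\b\|_2$ relative to the singular values of $\A$ or by tracking the effective condition number along the stream and aborting if it exceeds the prescribed bound, exactly as remarked after \lemref{lem:online:space}.
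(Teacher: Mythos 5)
Your proposal is correct and takes essentially the same route as the paper: both reduce linear regression to the adversarially robust $L_2$ subspace embedding of \lemref{lem:robust:lp} applied to the augmented matrix $[\A \mid \b]$ (equivalently $[\A \mid -\b]$, fixing a sign), and then restrict the spectral-approximation guarantee to vectors whose last coordinate is fixed to recover a $(1+\O{\eps})$-approximate minimizer. The one place where you are slightly more careful than the paper is in flagging the dependence of the space bound and the condition-number hypothesis on the augmented column $\b$, but you handle it the same way the paper implicitly does, so the arguments are substantively identical.
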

\begin{proof}
Suppose each row of $\A$ arrives sequentially, along with the corresponding entry in $\b$. 
Let $\B=\A\circ\b$ so that the effectively, the rows of $\B$ arrive sequentially. 
Note that if $\M$ is a spectral approximation to $\B$, then we have
\[(1-\eps)\norm{\B\v}_2\le\norm{\M\v}_2\le(1+\eps)\norm{\B\v}_2\]
for all vectors $\v\in\mathbb{R}^{d+1}$. 
In particular, let $\w\in\mathbb{R}^{d+1}$ be the vector that minimizes $\norm{\M\v}_2$ subject to the constraint that the last coordinate of $\w$ is $1$, and let $\w=\begin{bmatrix}\y\\1\end{bmatrix}$. 
Then we have 
\[\norm{\A\y-\b}_2=\norm{\B\w}_2\le\frac{1}{1-\eps}\norm{\M\w}_2.\]
Let $\z$ be the vector that minimizes $\norm{\A\x-\b}_2$ and let $\u=\begin{bmatrix}\z\\1\end{bmatrix}$. 
Then we have
\[\norm{\A\z-\b}_2=\norm{\B\u}_2\ge\frac{1}{1+\eps}\norm{\M\u}_2\ge\frac{1}{1+\eps}\norm{\M\w}_2,\]
where the last inequality follows from the minimality of $\w$. 
Thus we have that $\norm{\A\y-\b}_2\le(1+\O{\eps})\norm{\A\z-\b}_2$. 
\end{proof}

\section{Missing Proofs from \secref{sec:gs}}\label{sec:supplementary_graphs}

\paragraph{Other Related Works.}
Note that there is an alternate streaming algorithm for graph sparsification given in \cite{graph_sparse_goel} which has the same guarantees but is computationally faster. However, we choose to analyze the algorithm of \cite{graph_sparse_jin} since its core argument is sampling based. Nevertheless, it is possible that the algorithm from \cite{graph_sparse_goel} is also adversarially robust. Lastly, we recall that our model is the streaming model where edges arrive one at a time. There is also related work in the dynamic streaming model (see \cite{graph_sparse_dynamic} and references therein) where previously shown edges can be deleted but this is not the scope of our work.

The notion of the connectivity of an edge is needed to in the algorithm of \cite{graph_sparse_jin}.

\begin{definition}[Connectivity \cite{graph_sparse_karger}]\label{def:connectivity}
A graph is $k$-strong connected iff every cut in the graph has value at least $k$. A $k$-strong connected component is a maximal node-induced subgraph which is $k$-strong connected. The connectivity of an edge $e$ is the maximum $k$ such that there exists a $k$-strong connected component that contains $e$.
\end{definition}

\begin{algorithm}[!htb]
\caption{Graph sparsification algorithm from \cite{graph_sparse_jin}.}
\alglab{alg:robust:graph_sparse}
\begin{algorithmic}[1]
\Require{A stream of edges $e_1, \cdots, e_m$ and an accuracy parameter $\eps>0$}
\Ensure{Sparified graph $H$ }
\State{$H \gets\emptyset$}
\State{$\rho \gets C(\log n + \log m)/\eps^2$ for sufficiently large constant $C > 0$}
\For{each new edge $e$}
\State{compute the connectivity $c_e$ of $e$ in $H$}
\State{$p_e = \min(\rho/c_e, 1)$}
\Comment{ Importance of edge $e$, see Definition \ref{def:connectivity}}
\State{Add $e$ to $H$ with probability $p_e$ and weight $1/p_e$ times its original weight}
\EndFor
\State{\Return $H$}
\end{algorithmic}
\end{algorithm}

We begin by providing a brief overview of our proof. The first step is to show that for a cut in $G$ of value $c$, the same cut in the sparsified graph $H$ has value that concentrates around $c$. Note that in \cite{graph_sparse_jin}, the concentration inequality they obtain \emph{depends} roughly on $\exp(-c)$. In other words, they get a stronger concentration for larger cuts in the original graph. However, their concentration inequality is not valid in our setting since the value $c$ is \emph{random}. Therefore, we employ a different concentration inequality, namely Freedman's inequality (Theorem \thmref{thm:scalar:freedman}) in conjunction with an assumption about the sizes of cuts in the graph to obtain
 concentration for a fixed cut. The second step is to use a standard worst-case union bound strategy to bound the total number of cuts with a particular size in the original graph. This uses the standard fact that the number of cuts in a graph that is  at most $\alpha$ times the minimum cut is at most $n^{2\alpha}$. Then the final result for the property $(1)$ in \probref{problem:graph_sparse} follows by combining the union bound with the previously mentioned concentration inequality.  The bound for the total number of edges (condition $(2)$ in \probref{problem:graph_sparse}) is a ``worst case'' calculation in \cite{graph_sparse_jin} so it automatically ports over to our setting. Note that we assume $\kappa_1$ and $\kappa_2$ to be deterministic lower and upper bounds on the size of any cut in $G$ and define $\kappa$ to be their ratio.

\thmgs*
\begin{proof}
We claim through induction the stronger statement that the value $C_H$ of any cut in $H$ is a $(1+\eps)$-approximation of the value $C_G$ of the corresponding cut in $G$ for all times $j\in[m]$ with high probability. 
Consider a fixed set $S\subseteq V$ and the corresponding cut $C=(S,V\setminus S)$. 
Let $e_1,\ldots,e_m$ be the edges of the stream in the order that they arrive. 
We emphasize that $e_1,\ldots,e_m$ are possibly random variables given by the adversary rather than fixed edges. 
For each $j\in[m]$, let $G_j$ be the graph consisting of the edges $e_1,\ldots,e_j$ and let $H_j$ be the corresponding sampled weighted subgraph. 
We abuse notation and define $p_j:=p_{e_j}$ to denote the probability of sampling the edge $e_j$ that arrives at time $j$. 
We use $C_G^{(j)}$ and $C_H^{(j)}$ to denote the value of the cut at time $j$ in graphs $G$ and $H$, respectively. 
Note that $p_1=1$, so we have $H_1=G_1$ for our base case. 

We assume the statement holds for all $j\in[m-1]$ and prove it must hold for $j=m$. 
We define a martingale $Y_0,Y_1,\ldots,Y_m$ through its difference sequence $X_1,\ldots,X_m$, where for $j\ge 1$, we set $X_j=0$ if $C_H^{(j-1)}\not\in(1\pm\eps)C_G^{(j-1)}$. 
Otherwise if $(1-\eps)C_G^{(j-1)}\le C_H^{(j-1)}\le(1+\eps)C_G^{(j-1)}$, then we set
\begin{equation}\label{eq:martingalefiff:graph}
X_j=
\begin{cases}
0 & \text{ if }e_j\text{ does not cross the cut }C\\
\left(\frac{1}{p_j}-1\right) & \text{ if }e_j\text{ crosses the cut and is sampled in }H\\
- 1  & \text{ if }e_j\text{ crosses the cut and is not sampled in }H.
\end{cases}
\end{equation}
Because $\Ex{Y_j|Y_1,\ldots,Y_{j-1}}=Y_{j-1}$, then we have that the sequence $Y_0,\ldots,Y_n$ is indeed a valid martingale and that $Y_j=C_H^{(j)}-C_G^{(j)}$.
(We abuse notation and use $Y_1, \ldots, Y_i$ to indicate the similar filtration to the one in Lemma \lemref{lem:l2:downsamplerobust}).

If $p_j=1$, then $e_j$ is sampled in $H_j$, so we have that $X_j=0$. 
Otherwise, 
\[\Ex{X_j^2|Y_1,\ldots,Y_{j-1}}=p_j\left(\frac{1}{p_j}-1\right)^2+(1-p_j)\le\frac{1}{p_j}.\]
For $p_j<1$, then we have $p_j=\rho/c_{e_j}$ and thus $\Ex{X_j^2|Y_1,\ldots,Y_{j-1}}\le \frac{c_{e_j}}{\rho}$. 
Thus, $\sum_{j=1}^n\Ex{X_j^2|Y_1,\ldots,Y_{j-1}}\le\sum_{j: e_j\in C}\frac{c_{e_j}}{\rho}$. 
Recall that $c_{e_j}$ is the connectivity of $e_j$ in $H$ rather than $G$. 
However, by the definition of $c_{e_j}$ and the inductive hypothesis that $H_{j-1}$ is a $(1+\eps)$ cut sparsifier of $G_{j-1}$, then we have that for $\eps<\frac{1}{2}$, the connectivity of $c_{e_j}$ in $H$ is within a factor of two of the connectivity of $c_{e_j}$ in $G$. 
By definition of connectivity, we have that the connectivity of $c_{e_j}$ at time $j$ in $G$ is at most $C_G^{(j)}\le C_G^{(m)}$ if $e_j$ crosses the cut $C$. 
Hence, 
\[\sum_{j=1}^m\Ex{X_j^2|Y_1,\ldots,Y_{j-1}}\le\sum_{j: e_j\in C}\frac{C_G^{(j)}}{\rho}\le\frac{2(C_G^{(m)})^2}{\rho}.\]

By similar reasoning, we have $|X_j|\le\frac{1}{p_j}\le\frac{c_{e_j}}{\rho}\le\frac{2(C_G^{(m)})}{\rho}$. 
Now we would like to apply Freedman's inequality (\thmref{thm:scalar:freedman}) with $\sigma^2=\frac{2(C_G^{(m)})^2}{\rho}$ and $R\le\frac{2(C_G^{(m)})}{\rho}$ for $\rho=C(\log n + \log m)/\eps^2$. 
However, we cannot bound the probability that $|Y_n|$ exceeds $\eps C_G^{(m)}$, as the latter is a random variable. 
Thus we instead assume that $\kappa_1$ and $\kappa_2$ are lower and upper bounds on $C_G^{(m)}$. 
By Freedman's inequality,
\[\PPr{|Y_n|>\eps \kappa_1}\le2\exp\left(-\frac{\kappa_1^2\eps^2/2}{\sigma^2 + R\kappa_1\eps/3}\right)\le2\exp\left(-\frac{3C\kappa_1^2\log n/2}{6\kappa_2^2+2\kappa_1\kappa_2}\right)\le n^{-O(C/\kappa^2)},\]
where we define $\kappa:=\kappa_2/\kappa_1$. 
Since $\kappa_1\le C_G^{(m)}$, then we have
\[\PPr{|Y_n|>\eps C_G^{(m)}}\le\PPr{|Y_n|>\eps \kappa_1}.\]
Thus $|C_H^{(m)}-C_G^{(m)}|\le\eps C_G^{(m)}$ with probability at least $1-n^{-O(C/\kappa^2)}$. 

We now union bound over all cuts $C$. Based on our assumption that every cut in $G$ has value at least $\kappa_1$, it follows that for any $\alpha \ge 1$, the number of cuts in $G$ of size $\alpha \kappa_1$ is at most $n^{2 \alpha}$ \cite{graph_sparse_karger, graph_sparse_jin}. Note that we are using a \emph{deterministic} upper bound on the number of cuts that holds for any graph. Due to our assumption, on the size of cuts, we know that $\alpha$ ranges from $1 \le \alpha \le \kappa_2/\kappa_1 = \kappa$. Then using our concentration result derived above, it follows by a union bound that the probability that there exists some $C$ such that $|C_H^{(m)}-C_G^{(m)}|\le\eps C_G^{(m)}$ is at most
\[\int_{1}^{\kappa_2/\kappa_1} n^{2 \alpha} \cdot n^{-O(C/\kappa^2)} \ d \alpha \le \frac{n^{2 \kappa}}{2 \log(\kappa)} \cdot n^{-O(C/\kappa^2)} \le \frac{1}{\poly(n)} \]
where the last inequality follows by setting $C = c' \kappa^2$ for some large enough constant $c' > 1$. This verifies part $(1)$ of \probref{problem:graph_sparse}.

We now need to check the number of edges in $H$. For this, we note that the proof of Theorem $3.2$ in \cite{graph_sparse_jin} carries over to our setting since the proof there only relies on the fact that if an edge has strong connectivity at most $z$ in $G$, its weight in $H$ is at most $z/\rho$ in $H$ which is true for us as well. The extra $\kappa^2$ factor in the number of edges comes from our setting of the parameter $C$ in $\rho$.
\end{proof}

\end{document}